\newtheorem{theorem}{Theorem}%  meant for continuous numbers
\newenvironment{manualtheorem}[1]{%
  \manualtheoreminner
}{\endmanualtheoreminner}
\journal{arXiv}
\newtheorem{example}{Example}%
\newtheorem{remark}{Remark}%
\newtheorem{lemma}{Lemma}%
\newtheorem{definition}{Definition}%
\begin{document}

\begin{frontmatter}
	\title{ABIGX: A Unified Framework for eXplainable Fault Detection and Classification}

	\author[address1]{Yue Zhuo}\ead{zhuoy1995@zju.edu.cn}    % Add the 
	\author[address1]{Jinchuan Qian}\ead{qianjinchuan@zju.edu.cn}             % e-mail address 
	\author[address2]{Zhihuan Song}\ead{zhsong@iipc.zju.edu.cn}
	\author[address1]{Zhiqiang Ge\corref{1}} 
	\cortext[1]{corresponding author. Tel: +86-571-87951442.}
	\ead{gezhiqiang@zju.edu.cn}  % (ead) as shown
	
	\address[address1]{ State key Laboratory of Industrial Control Technoogy, College of Control Science and Engineering, Zhejiang University, Hangzhou, 310027, PR China} 
	\address[address2]{ School of Automation, Guangdong University of Petrochemical Technology, Maoming, 525000, Guangdong, China}

	\begin{keyword}
		eXplainable Artificial Intelligence (XAI), Fault Detection, Fault Classification, Fault Diagnosis, Variable Contribution
	\end{keyword}
	
	\begin{abstract}
	For explainable fault detection and classification (FDC), this paper proposes a unified framework, ABIGX (Adversarial fault reconstruction-Based Integrated Gradient eXplanation). ABIGX is derived from the essentials of previous successful fault diagnosis methods, contribution plots (CP) and reconstruction-based contribution (RBC). It is the first explanation framework that provides variable contributions for the general FDC models. The core part of ABIGX is the adversarial fault reconstruction (AFR) method, which rethinks the FR from the perspective of adversarial attack and generalizes to fault classification models with a new fault index. For fault classification, we put forward a new problem of fault class smearing, which intrinsically hinders the correct explanation. We prove that ABIGX effectively mitigates this problem and outperforms the existing gradient-based explanation methods. For fault detection, we theoretically bridge ABIGX with conventional fault diagnosis methods by proving that CP and RBC are the linear specifications of ABIGX. The experiments evaluate the explanations of FDC by quantitative metrics and intuitive illustrations, the results of which show the general superiority of ABIGX to other advanced explanation methods.
	\end{abstract}
	
\end{frontmatter}

\section{Introduction}
%FDC
\par Data-driven fault detection and classification (FDC) is an important technique for recognizing patterns of faults in industrial processes. Fault detection~\cite{5282515} decides whether a fault has occurred, which is realized by measuring the residual projection of unsupervised models like Principal Component Analysis (PCA)~\cite{9430765} and AutoEncoder (AE)~\cite{9632460}. Fault classification is to determine the types of faults, where the artificial neural networks (NN)~\cite{yadav2014overview} have achieved great success recently.
%XFDC
\par Nevertheless, these complex models become increasingly opaque, tending to be black boxes for the users. This opaque leads to the lack of trust in the model output (prediction), which is vital for practical applications in critical fields (e.g., industrial processes, autonomous systems and healthcare). The eXplainable AI (XAI)~\cite{surveyXAI} is to increase the transparency and trust of AI models, by seeking explanations for how predictions come about.

\par As a subfield of XAI, the eXplainable FDC (XFDC) targets to help people understand the data-driven FDC models and keep the human in loops of industrial processes. Given a fault sample, XFDC indicates how much each variable contributes to the monitoring results (detection score or classification confidence). The variables with high contributions are considered as the root cause of the anomaly (e.g., faulty sensors). The variable contributions by XFDC can support users in explaining the faults predicted by models and proactively repairing corresponding defective components.

%XFD and fault diagnosis
%\footnote{In this work, we refer to \emph{fault diagnosis} as diagnosing the cause of a fault by determining the contribution of each variable to the fault detection indices~\cite{alcala2009reconstruction}, which is also referred to as \emph{fault isolation}.}.

\par Most previous works only focus on explaining fault detection models, known as the task of fault diagnosis. There are two most canonical fault diagnosis algorithms, contribution plots (CP)~\cite{miller1993contribution} and reconstruction-based contribution (RBC)~\cite{alcala2009reconstruction} based on fault reconstruction (FR)~\cite{FR}, aiming at the linear PCA-based detectors. Recently, many variants~\cite{hallgrimsson2020improved,qian2020locally,tan2019multi,alcala2010reconstruction,deng2020sparse} extended them to nonlinear detection models (e.g., AE) with novel fault indices. 

\begin{figure}
	\centering
\includegraphics[width=0.48\textwidth]{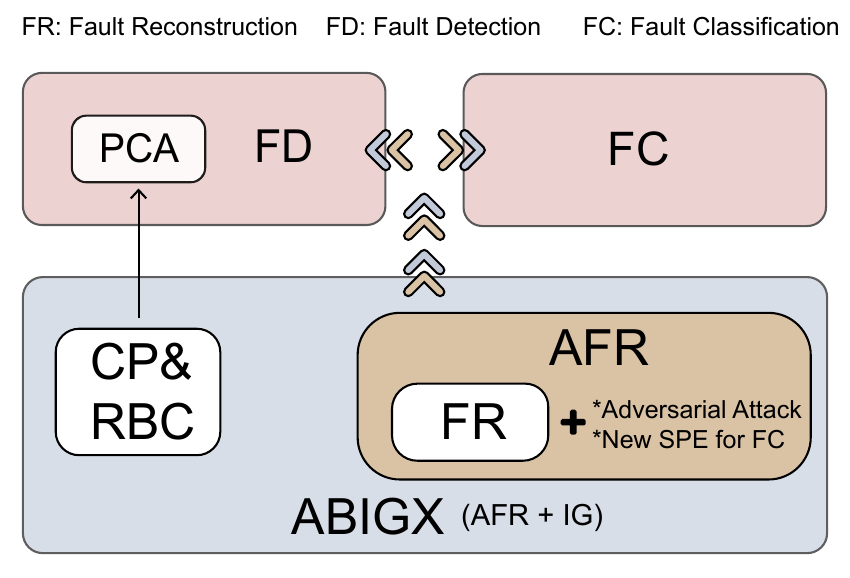}
\caption{The illustration of ABIGX and conventional fault diagnosis methods (RBC, CP, and FR). AFR generalizes the FR with adversarial attacks and a new SPE index for FC; ABIGX reframes linear RBC and CP to a unified framework combining AFR with Integrated Gradients (IG); ABIGX is applicable for the explanations of general FDC models.}
\label{fig_concepts}
\end{figure}

\par Distinctively, we first rethink FR from the adversarial attack perspective~\cite{goodfellow2014explaining} and prove FR shares the same optimization problem with the adversarial attack. We propose AFR, utilizing adversarial attacks to reconstruct faults in general FDC models\footnote{We refer to \emph{general FDC models} as fault detection and classification with various model modalities, including but not limited to PCA, AE, Multilayer Perceptron (MLP), and Convolutional Neural Network (CNN).}. Besides, we propose SPE fault index of the classification version, which enables AFR to achieve better fault reconstruction than vanilla attack algorithms.

\par ABIGX is based on AFR, which calculates the variable contributions by integrating gradients along the path from the explained samples to AFR-reconstructed samples. For explainable fault classification, we raise the fault class smearing problem, which is the intrinsic effect causing the incorrect variable contributions. Then we analyze the fault class smearing in the explainers of saliency map~\cite{simonyan2013deep}, Integrated Gradient (IG)~\cite{sundararajan2017axiomatic} and ABIGX, among which we prove that ABIGX performs best in mitigating this effect. For explainable fault detection, we bridge ABIGX with the canonical fault diagnosis methods by proving that CP and RBC are the linear specifications of ABIGX. 
%ADV and XAI

% \par Adversarial learning studies the security issue of intelligent models, where the adversarial examples (AE) can mislead the models to give wrong output with a tiny perturbation. Recently, many works realized the close connection between AE and explanation~\cite{pmlr-v151-pawelczyk22a,browne2020semantics,ignatiev2019relating,liu2018adversarial,tao2018attacks}, which can be concluded: Counterexample is a generalization of AE and counterexample generates explanation, so AE provides explanation.

% %XFC
% \par Whereas the detailed research on the correlation between AE and explanation, this work finds that plain AE may provides misleading explanation for classification models, which is caused by the model geometry. Inspired by the RBC algorithm, we proposal a novel adversarial attack method for the explanation of fault classification, named reconstruction-based adversarial examples (RAE). RAE solves the misdiagnosis problems in the plain AEs and generates more accurate explanation.

\par Fig \ref{fig_concepts} illustrates the relationships between the major methods involved in the paper. In summary, the contributions of this work are four-fold:
\begin{itemize}
	\item We rethink the fault reconstruction from the adversarial attack perspective and propose AFR for the general FDC models. The new SPE index enables AFR to achieve reliable fault reconstruction for classification, which was less considered before. (Section \ref{sec_afr}) 
	\item We propose a unified framework, ABIGX, for effectively explaining general FDC models. We demonstrate that ABIGX includes the conventional fault diagnosis methods in a bigger framework, by proving CP and RBC are the linear specifications of ABIGX. (Section \ref{sec_abigx}) 
	\item We put forward a new problem in explainable fault classification, fault class smearing, which intrinsically causes inaccurate variable contributions. We analyze the performances of ABIGX, saliency map and IG under the fault class smearing and prove that our proposal achieves the best variable contributions that mitigate the smearing problem. (Section \ref{sec_FD}) 
	\item We perform experiments on both sensor (Tennessee Eastman Process) and image (wafer map fault) datasets with quantitative metrics for evaluating the XFDC. The results with deep insight show that ABIGX generally outperforms previous XAI methods. (Section \ref{sec_expr}) 
\end{itemize}
\par The remainder of the paper is organized as: Section \ref{sec_prelim} introduces the preliminaries and related works in the fields of FDC, fault diagnosis, adversarial attack, and XAI; Section \ref{sec_afr} proposes AFR by rethinking fault reconstruction from adversarial attack; Section \ref{sec_abigx} presents the unified framework, ABIGX, and proves its explainability by analyzing the fault class smearing problem; Section \ref{sec_FD} theoretically bridges the canonical fault diagnosis methods with ABIGX; Section \ref{sec_expr} reports the experiments for evaluating our proposal; Section \ref{sec_conclu} concludes the paper.

\section{Preliminaries and Related Works}
\label{sec_prelim}
\subsection{FDC}

\par \textbf{Fault detection} is to model the normal (working condition) data by statistical models. The models divide samples into a principal component subspace and a residual subspace, respectively containing normal signals and noise. Then, the \emph{control limit} is defined by the statistical indices in these two subspaces. Given an observed sample, the fault is detected if its index goes beyond the control limit. Early fault detection models are based on PCA~\cite{nomikos1995multivariate,chiang2000fault,joe2003statistical}, and recent deep learning methods mainly use the AE as the basic model~\cite{8386786,9199886,abid2021review}.
\par Given a dataset $\mathbf{X}=\{\mathbf{x}_1, \mathbf{x}_1, \cdots, \mathbf{x}_m\}\in \mathbb{R}^{m\times n} $, in which the sample of $n$ variables, $\mathbf{x}\in \mathbb{R}^{n}$, is free from faults. The fault detection models are trained on $\mathbf{X}$ to minimize the residual error. We define them as $f_{FD}(\mathbf{x})=\hat{\mathbf{x}}$ with the output of principal component samples\footnote{The output of AE model is commonly named as \emph{reconstructed samples} or \emph{reconstruction}. To avoid confusion with the fault reconstruction, this paper refers to the output sample of both PCA and AE as \emph{principal component sample} (i.e., the projection in principal component subspace).} $\hat{\mathbf{x}}\in \mathbb{R}^{m\times n} $.
\par \emph{PCA} model is defined as:
\begin{align}
	\label{eq_pcafd}
 f^{PCA}(\mathbf{x}) = \mathbf{P}\mathbf{P}^T\mathbf{x}
\end{align}
where $ \mathbf{P}\in \mathbb{R}^{n\times l}$ is the principal loading for the projection to principal component subspace $\hat{\mathbf{x}} =  \mathbf{P} \mathbf{P}^T\mathbf{x}$ and $l$ is the number of principal components. The loading matrix can be computed via eigendecomposition for the covariance matrix of $\mathbf{X}$.
%  $\tilde{ \mathbf{P}}\in \mathbb{R}^{n\times (n-l)}$ is the residual loading for the residual subspace projection $\tilde{\mathbf{x}} = \tilde{ \mathbf{P}}\tilde{ \mathbf{ \mathbf{P}}}^T\mathbf{x}$, 
\par \emph{AE} consists of two neural networks, an encoder and a decoder: 
\begin{align}
 f^{AE}(\mathbf{x}) = f_{dec}(f_{enc}(\mathbf{x}))
\end{align}
where the parameters of two networks are jointly trained by the back-propagation algorithms.
%\par AE can be regarded as an nonlinear extension of PCA, and can learn more complex variable patterns. Despite the improvement of model capacity, the neural network-based models become opaque, the explanation of which is in challenges and urgent need.
%where $\hat{\mathbf{x}}$ is the reconstructed sample and $\tilde{\mathbf{x}} = \mathbf{x} - \hat{\mathbf{x}}$ is the reconstruction error.
\par \emph{SPE index} for fault detection in this work is based on squared prediction error (SPE), which is the $\ell_2$ norm measurement of residual error:
%. We define the detection model with the logit output of SPE index
\begin{align}
	\label{eq_SPE}
	SPE(\mathbf{x}) = \Vert \mathbf{x} - f_{FD}(\mathbf{x}) \Vert_2^2
\end{align}
\par The control limit $\delta^2$ is derived by the confidence limits of SPE values over normal data distribution. The sample is predicted as normal when its SPE is lower than the control limit. The fault detection models for binary label prediction $\hat{f}_{FD}:\mathbb{R}^{n}\rightarrow \{0,1\}$ is defined as:
\begin{align}
	\hat{f}_{FD}(x,\delta^2)&=\left\{
		\begin{aligned}
		&0,\ SPE(\mathbf{x})  \leq \delta^2 \\
		&1,\ otherwise
		\end{aligned}
		\right.
\end{align}

\par \textbf{Fault classification} is to recognize the different patterns in the fault samples. Unlike fault diagnosis, classification is a supervised task with labeled training datasets. This work focuses on the most popular classification model: neural networks.
\par Given a dataset $\mathbf{X}=\{\mathbf{x}_1, \mathbf{x}_1, \cdots, \mathbf{x}_m\}\in \mathbb{R}^{m\times n} $ and label $Y \in \{0,1,\cdots,k\}$ with $k$ being the number of fault types. The fault classifier $f_{FC}:\mathbb{R}^{n}\rightarrow \{0,1,\cdots,k\}$ is trained by:
\begin{align}
	f_{FC} = \arg \min_{f} \mathbb{E}_{\mathbf{x}\in \mathbf{X},\ y\in Y} J(y,f(\mathbf{x}))
\end{align}
where $J$ is the cross-entropy loss function for NN.

\subsection{Fault Diagnosis}
\par The explanations of fault detection models have been carefully studied before, which is referred to as fault diagnosis in the previous works~\cite{alcala2009reconstruction,miller1993contribution,qian2020locally,dong2019quality}. After a fault has been detected, the fault diagnosis methods are to diagnose the root cause of a fault by determining the contribution of each variable to the fault detection indices (e.g., SPE). We briefly discuss three canonical fault diagnosis methods that explain PCA models.
% This is identical to the feature attribution problem in the filed of XAI~\cite{das2020opportunities}.
% \par We first discuss fault reconstruction~\cite{FR}, which is a part of fault diagnosis. Subsequently, we carefully introduce two basic and popular diagnosis methods for diagnosing PCA models: reconstruction-based contribution (RBC)~\cite{alcala2009reconstruction} and contribution plots (CP)~\cite{miller1993contribution}.

\par \textbf{Fault reconstruction}~\cite{FR} is to determine the necessary adjustment to bring the fault indices (e.g., SPE) back to normality. This necessary adjustment involves two parameters: direction $\mathbf{\xi}_i$ and magnitude $f_i$, where $f_i$ is a scalar and $\mathbf{\xi}_i$ is the $i$-th column of identity matrix indicating the $i$-th variable. For the sample containing five sensor variables, the direction $\mathbf{\xi}_i$ of $1$-st variable is:
\begin{align}
	\mathbf{\xi}_i = \left(\begin{array}{ccccc}
		1 &
		0 &
		0 &
		0 &
		0		
	\end{array} \right)^T
\end{align}
\par Most previous fault reconstruction methods predefine the reconstruction variable $\mathbf{\xi}_i$ in turn and solve the magnitude $f_i$. Given $\mathbf{\xi}_i$, the fault reconstruction is to find $f_i$:
\begin{align}
	f_i = \arg \min SPE(\mathbf{x}-f_i\mathbf{\xi}_i) 
	\label{eq_faultrecon}
\end{align}
where $\mathbf{x}$ is the fault sample and $\mathbf{x}-f_i\mathbf{\xi}_i$ is the reconstructed sample brought back to the normality.

\par The fault direction can be determined by the direction $\mathbf{\xi}_i$ such that the SPE of the reconstructed sample is minimized. However, fault reconstruction cannot provide accurate contributions for each variable, which is solved by RBC.

\par \textbf{RBC}~\cite{alcala2009reconstruction} is to calculate the fault index of the reconstruction vector $f_i\mathbf{\xi}_i$, which is expressed as:
\begin{align}
	\label{eq_rbcdef}
	RBC_i \equiv SPE(f_i\mathbf{\xi}_i) &= \Vert f_i\mathbf{\xi}_i - \mathbf{P}\mathbf{P}^T f_i\mathbf{\xi}_i \Vert_2^2 \notag \\
	&= (f_i\mathbf{\xi}_i)^T (\mathbf{I}-\mathbf{P}\mathbf{P}^T)f_i\mathbf{\xi}_i\notag \\
	&= (f_i\mathbf{\xi}_i)^T \tilde{\mathbf{C}} f_i\mathbf{\xi}_i
\end{align}
where $\tilde{\mathbf{C}}=\mathbf{I}-\mathbf{P}\mathbf{P}^T$ is the projection matrix to residual subspace.

\par \textbf{CP}~\cite{miller1993contribution} is directly constructed by determining the contribution of each variable to the detection index. Firstly, SPE defined can be expanded:
\begin{align}
	SPE(\mathbf{x}) =  \Sigma_{i=1}^{n}(\mathbf{x}_i - f_{FD}(\mathbf{x})_i)^2
\end{align}
where $\mathbf{x}_i$ and $f_{FD}(\mathbf{x})_i$ are the $i$-th variable of detected sample and principal component sample. The SPE contribution $C_i$ can be simply calculated by the squared value difference between the $i$-th variable of the input and output sample:
\begin{align}
	\label{eq_contri}
	C_i &= (\mathbf{x}_i - f_{FD}(\mathbf{x})_i)^2 \notag \\
		&= 	(\mathbf{\xi}_i^T\tilde{\mathbf{C}}\mathbf{x})^2  \qquad \text{for PCA}
\end{align}

\par \textbf{Our Distinctions:} Recently, fault diagnosis is dedicated to adapting RBC~\cite{qian2020locally,alcala2010reconstruction,ren2018new} and CP~\cite{hallgrimsson2020improved,tan2019multi} to nonlinear models. However, most of them are built on the conventional framework of fault reconstruction, which considered the variable contribution independently, while the interconnection between variables is less considered, which could be a significant problem for the nonlinear models. The distinctions of our proposal are mainly two-fold: 1) We consider fault reconstruction from the adversarial attack perspective, the AFR of which could automatically solve the variable combinations that contribute most to the fault index. 2) For the first time, we unify CP and RBC into a new framework, ABIGX.

%Thousands of XAI works have been proposed for , such text document classification, image objective detection, and disease prediction~\cite{samek2017explainable,surveyXAI}.
\subsection{Gradient-based XAI methods}
\par Gradient-based XAI method can directly attribute the input feature through backward propagation, which are computationally efficient and applicable for multiple model modalities. Recently, Integrated Gradient (IG)~\cite{sundararajan2017axiomatic} is a popular gradient-based method and shares some identical intrinsic properties with fault diagnosis methods. Hence, this paper mainly focuses on IG and another canonical method: saliency map (vanilla gradient)~\cite{simonyan2013deep}.

\par \textbf{Saliency map} directly provides the explanation by computing the gradient of output prediction w.r.t. the input variables, which is a natural analogue of model behavior:
\begin{align}
	Grad_i = \frac{\partial f(\mathbf{x})}{\partial x_i}
	\label{eq_grad}
\end{align}
where the $f$ is the SPE index for fault detection and logit value of the ground-truth class for fault classification.

\par \textbf{IG} is a popular XAI algorithm. Instead of simply computing the gradient, IG accumulates the gradients along a path $\gamma(\alpha)$ ($\alpha\in [0,1]$) from the explained sample $\mathbf{x}$ to a counterfactual baseline $\mathbf{x}'$\footnote{The counterfactual samples contrastively explain the FDC models by answering:``Which features cause the model to predict fault rather than normal?'' The normal sample is counterfactual to the explained anomaly sample.}. The IG contribution of $i$-th variable is defined as:
\begin{align}
	\label{eq_IG}
	IG_i = \int_{\alpha=0}^1 \frac{\partial f(\gamma(\alpha))}{\partial \gamma_i(\alpha)} \frac{\partial \gamma_i(\alpha)}{\partial \alpha}\,d\alpha
\end{align}
% \begin{align}
% 	\label{eq_IG}
% 	IG_i = (x_i - x'_i)\int_{\alpha=0}^1 \frac{\partial f(\mathbf{x}+\alpha(\mathbf{x}-\mathbf{x'}))}{\partial x_i} \,d\alpha
% \end{align}
where $\gamma(0)=\mathbf{x}'$ is the baseline, $\gamma(1)=\mathbf{x}$ is the explained fault sample, and the $f$ is the same FDC output function as defined in the vanilla gradient.

\par \textbf{Our distinctions:} From the perspective of IG methods, ABIGX essentially proposes a novel baseline that is the reconstructed fault by AFR. Unlike previous IG-based methods that arbitrarily choose the baselines~\cite{sturmfels2020visualizing}, our AFR is exclusively designed for contrasting the fault sample and normality. AFR highlights the significant variables such that the fault samples are recognized by FDC models. The comparison between ABIGX and IG methods is also discussed in the following.

\subsection{Adversarial attack}
\par Adversarial attack is a critical security issue of ML models, adversarial robustness of FDC models has been studied in the work~\cite{9852307}. Concerning the attacks on fault samples, the adversarial examples $\mathbf{x}'$ is solved by:
% \par The adversarial examples $\mathbf{x}'$ is defined~\cite{ren2020adversarial}:
% \begin{align}
% 	\mathbf{x}':D(\mathbf{x},\mathbf{x}')<\eta,\, f(\mathbf{x}')\neq y
% 	\label{eq_ae}
% \end{align}

\begin{align}
	\mathbf{x}'=\arg &\min f(\mathbf{x}')\notag \\
	s.t.\ &D(\mathbf{x},\mathbf{x}')<\eta, \notag\\
	& f(\mathbf{x}')< y
\label{eq_fdadv}
\end{align}
where $f$ is the same as defined in Eq. \ref{eq_grad}, $D$ is the distance measurement, $\eta$ is a predefined distance constraint, and $y$ is the fault label (the second constraint is not strictly necessary).

\par The adversarial attack and model explanation (especially the counterfactual explanation) are two closely-related fields. The counterfactual examples can be regareded as the generalization of adversarial examples. This (dis)similarities have been theoretically and empirically studied in many works~\cite{CEAE_3,CEAE_4}. As for fault diagnosis, the target of fault reconstruction can be regarded as searching the counterfactual examples, which derives our adversarial fault reconstruction (AFR) algorithm. 

% \subsubsection{Gradient-based attack}
% \par We briefly introduce the two basic attack algorithms associated with our work: Fast gradient sign method (FGSM) and Projection 

% \par \emph{Fast gradient sign method} is the typical adversarial attack algorithms, which performs 
% one-step perturbation on the gradient direction:

% where $\eta$ is the perturbation magnitude and $\Vert\nabla_\mathbf{x} f(\mathbf{x}) \Vert_p$ is normalization of gradient vector under $\ell_p$ norm.

\section{Fault Reconstruction: Adversarial Perspective} 
\label{sec_afr}
\par First of all, we study the fault reconstruction from the adversarial attack perspective and state that the existing fault reconstruction solves the same optimization problem with the sparse adversarial attack.

\begin{theorem}
	\label{theo_rbc}
	For fault detection, the existing fault reconstruction shares the same optimization objective with the sparse adversarial attack on fault samples.
\end{theorem}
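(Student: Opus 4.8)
The plan is to show that a single change of variables turns the fault reconstruction problem (Eq.~\ref{eq_faultrecon}) into precisely the sparse adversarial attack problem (Eq.~\ref{eq_fdadv}), by matching objective and feasible set term by term. First I would introduce the reconstructed sample $\mathbf{x}' = \mathbf{x} - f_i\mathbf{\xi}_i$ as the optimization variable. Substituting this into Eq.~\ref{eq_faultrecon} rewrites fault reconstruction as $\min_{\mathbf{x}'} SPE(\mathbf{x}')$, which is syntactically identical to the attack objective $\min_{\mathbf{x}'} f(\mathbf{x}')$ of Eq.~\ref{eq_fdadv}, since $f$ there is fixed to be the SPE index (as declared after Eq.~\ref{eq_grad}). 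Thus the two problems already share the same objective function, and what remains is to argue that the feasible sets coincide.

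The key structural observation is that the admissible perturbation in fault reconstruction, $\mathbf{x} - \mathbf{x}' = f_i\mathbf{\xi}_i$, is a scalar multiple of a single column of the identity matrix and is therefore supported on exactly one coordinate, so $\Vert \mathbf{x} - \mathbf{x}'\Vert_0 \le 1$ independently of the magnitude $|f_i|$. This is precisely a \emph{sparsity} budget on the perturbation, which is the defining feature of a sparse attack. I would therefore specialize the generic distance $D$ in the attack constraint $D(\mathbf{x},\mathbf{x}')<\eta$ to the $\ell_0$ (pseudo-)norm and take $\eta$ to be the sparsity level; with $\eta$ chosen so that at most one coordinate may move, the attack's feasible set is exactly the one swept out by $f_i\mathbf{\xi}_i$ as $f_i$ ranges over $\mathbb{R}$. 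If one reconstructs along a set of $s$ variables, the same reasoning yields $\Vert \mathbf{x} - \mathbf{x}'\Vert_0 \le s$, so the correspondence is not an artifact of the single-variable case. The remaining attack constraint $f(\mathbf{x}')<y$ merely requires the reconstructed index to drop below the fault level, i.e. $SPE(\mathbf{x}') \le \delta^2$, which is the definitional goal of fault reconstruction of bringing the sample back to normality; as the excerpt itself notes, it is not strictly necessary, so I would treat it as an optional terminal condition shared by both formulations.

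I expect the main obstacle to be reconciling how the two problems treat the perturbation \emph{support}: fault reconstruction fixes the direction $\mathbf{\xi}_i$ in advance and optimizes only the scalar magnitude $f_i$, whereas a sparse adversarial attack nominally optimizes jointly over which coordinates to perturb and by how much. The careful step is to make precise that, for a fixed support, the scalar minimization in Eq.~\ref{eq_faultrecon} is exactly the attack restricted to that support, and that ranging over the candidate directions $\mathbf{\xi}_i$ (as fault reconstruction does) corresponds to the $\ell_0$-budgeted attack searching over all supports of the prescribed size. Once the distance metric $D$ is identified with the $\ell_0$ norm, this matching of feasible sets is immediate, and the two optimization problems coincide in both objective and constraints, which establishes the claim.
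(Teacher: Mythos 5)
Your proposal is correct and follows essentially the same route as the paper: both identify the attack's distance $D$ with the $\ell_0$ norm, take the sparsity budget $\eta$ to permit a single perturbed coordinate so that the perturbation is $f_i\mathbf{\xi}_i$, observe that the objective in both problems is the same SPE index, and treat the attack-success constraint as optional, exactly as the paper notes. The only differences are cosmetic: you run the substitution from fault reconstruction toward the attack (the paper specializes the attack down to Eq.~\ref{eq_afr_onebyone}), and you make explicit the point the paper leaves implicit, namely that sweeping the predefined directions $\mathbf{\xi}_i$ corresponds to the attack's search over supports.
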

\begin{proof}
	According to Eq. \ref{eq_fdadv}, the sparse adversarial attack on fault detection can be defined with the $\ell_0$ norm constrain (not strictly require the attack success):
	\begin{align}
		\mathbf{x}'=&\arg \min SPE(\mathbf{x}')\notag \\
		&s.t.\ \Vert \mathbf{x}',\mathbf{x}\Vert_0 < \eta 
	\end{align}
where $\Vert\cdot \Vert_0$ measures the non-zero element number of the vector and $\eta\in \mathbb{N}^+$ controls the variable number to be attacked.
\par If $\eta=1$ for attacking only one variable and the attacked variable direction $\xi_i$ are given, the attack objective becomes computing the magnitude $f_i$ on this direction with $\mathbf{x}' = \mathbf{x} + f_i\xi_i$:
\begin{align}
	\label{eq_afr_onebyone}
	f_i =\arg \min SPE(\mathbf{x}-f_i\xi_i),\ where\ \Vert\xi_i \Vert_0 = 1
\end{align}
which is the same as fault reconstruction defined in Eq. \ref{eq_faultrecon}.
\end{proof}
\subsection{Adversarial Fault Reconstruction}
\par Theorem \ref{theo_rbc} inspires us to use the adversarial attack algorithms for more generalized fault reconstruction, where the one-by-one predefining reconstruction direction is unnecessary. Iteratively predefining direction and solving magnitude could be time-consuming and ignore the interaction between variables.

\par Hence, we propose Adversarial Fault Reconstruction (AFR) for FDC models. Instead of searching the optimal fault directions one-by-one with brute force, AFR can automatically solve the (combination of) variables minimizing the SPE indices by applying the magnitude constraint on the distance between explained samples $\mathbf{x}$ and reconstructed samples $\mathbf{x}'_{AFR}$.
\begin{definition}
	\label{def_afr}
	(AFR for FDC) AFR reconstructs the fault in FDC models with the following optimization problem:
	\begin{align}
		\label{eq_afr}
		\mathbf{x}'_{AFR} = &\arg \min_{\mathbf{x}'_{AFR}} SPE(\mathbf{x}'_{AFR}) \notag \\
		&s.t.\, \Vert \mathbf{x} - \mathbf{x}'_{AFR}\Vert_p \leq \eta
	\end{align}
	 $\eta$ is to constrain the distance between reconstructed and original fault samples, the measurement of which is $\ell_p$ norm ($p=0,1,2$).
\end{definition}

\par Both constrained $\ell_1$ and $\ell_2$ norm magnitude could enable the attack algorithm automatically to focus on the variables that contribute most to the SPE index. AFR is more general since the variable interaction is more complex in the nonlinear models, and one-by-one variable search is more computationally expensive in the high-dimensional samples like images. 

\par Notably, we declare that the conventional one-by-one variable fault reconstruction can be regarded as a specification of AFR, which is named as AFR-OneVar:

\begin{definition}
	(AFR-OneVar) AFR-OneVar computes a set of reconstructed samples $\{\mathbf{x}'^{(1)}_{AFR^{OV}},\mathbf{x}'^{(2)}_{AFR^{OV}},\cdots, \mathbf{x}'^{(n)}_{AFR^{OV}} \}$ respectively for $n$ variables, where $n$ optimization problems are involved. 
	\par Given $i$-th variable direction $\xi_i$, the $\mathbf{x}'^{(i)}_{AFR^{OV}}$ can be defined as Eq. \ref{eq_afr_onebyone}:
	\begin{align}
		\mathbf{x}'^{(i)}_{AFR^{OV}} = &\arg \min SPE(\mathbf{x}'^{(i)}) \notag \\
		&s.t.\ \mathbf{x}'^{(i)}_{AFR^{OV}} - \mathbf{x} = f_i\xi_i
	\end{align}
where $i\in\{0,1,\cdots,n\}$.
\end{definition}
\par This is similar to some previous fault diagnosis methods~\cite{ren2018new} that extend RBC to nonlinear AE models. Fig \ref{fig_abigx2d} intuitively illustrates the comparison of AFR and AFR-OneVar with a 2-D PCA detection model. 

\par Previously, the SPE in Eq. \ref{eq_afr} is only defined for fault detection. To make AFR also applicable for fault classification, we introduce a novel SPE index for the classification version.

\begin{figure}
	\centering
\includegraphics[width=0.48\textwidth]{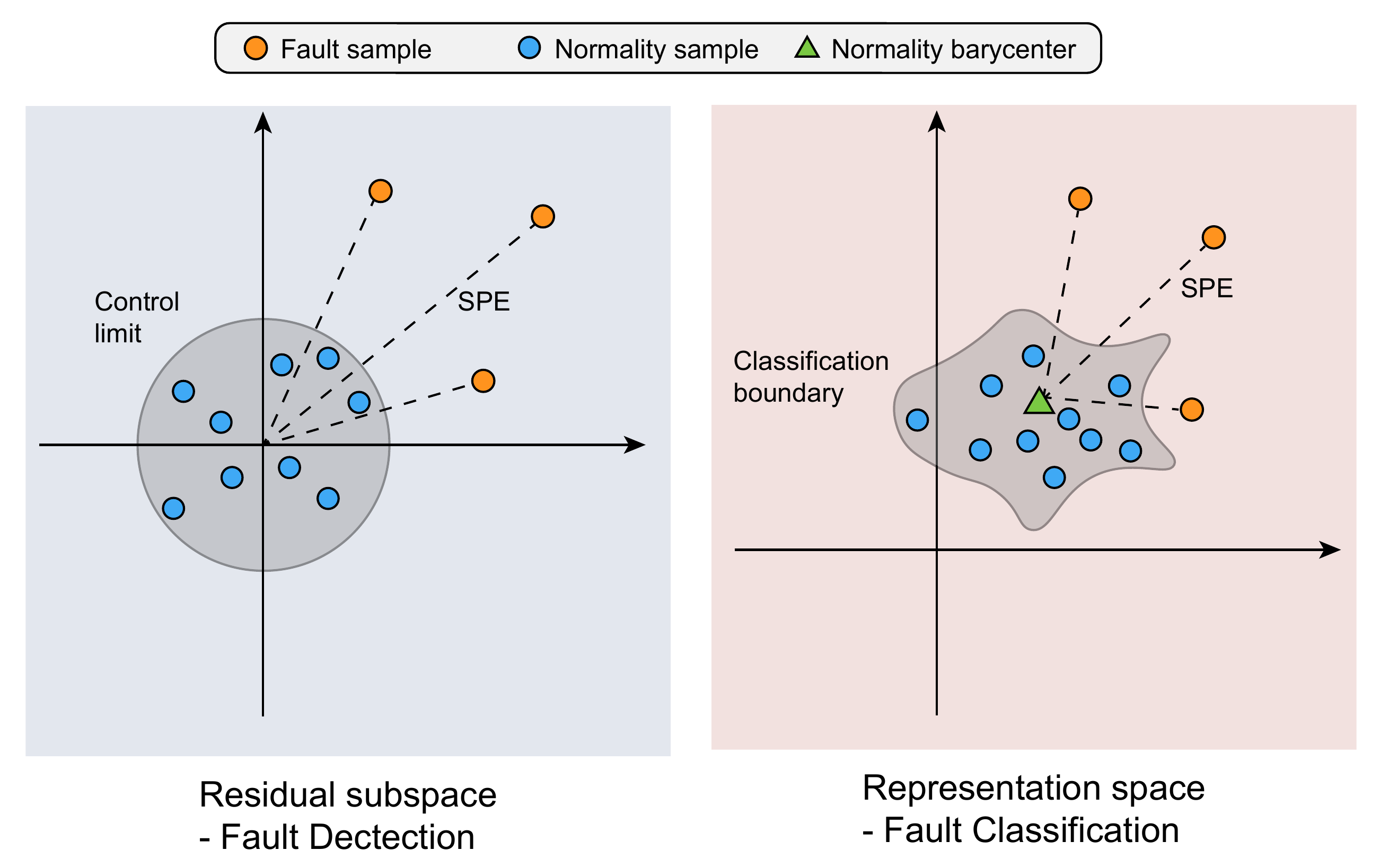}
\caption{SPE for fault classification (right), compared with the fault detection SPE (left).}
\label{fig_speFC}
\end{figure}
\subsection{SPE for Fault Classification}
\label{sec_spefc}
\par The SPE for fault classification is derived from fault detection (defined in Eq. \ref{eq_SPE}). Whereas the detection models are trained to minimize the residual projection of normal samples, we argue that the barycenter of normality samples' residual projection is on the zero point of residual subspace. Hence, the detection SPE for fault samples can be regarded as computing the Euclidean distance in residual subspace between fault samples' projection and barycenter of normality samples' projection (i.e., zero point). 
\par Thus, SPE of fault classification can be defined similarly. We compute the Euclidean distance in the representation space\footnote{Commonly, the penultimate layer's output of classification neural network is chosen as the representation.}. The barycenter of normality samples' representation is the ``zero point'' and the SPE is defined as the Euclidean distance between the normality barycenter and fault representation.
\par Fig. \ref{fig_speFC} illustrates the comparison with detection SPE and classification SPE. The definition of classification SPE is:

\begin{definition}
	(SPE for fault classification) Given the representation layer of classification network $h$, the set of normality samples $\mathbf{X}_{n}$, the classification SPE of fault sample $\mathbf{x}$ is:
	\begin{align}
		SPE_{FC}(\mathbf{x}) = \Vert h(\mathbf{x}) - \mathbb{E}_{\mathbf{x}_{n}\sim \mathbf{X}_{n} } [h(\mathbf{x}_{n})] \Vert_2^2
	\end{align}
	where the term $\mathbb{E}_{\mathbf{x}_{n}\sim\mathbf{X}_{n}}[h(\mathbf{x}_{n})]$ is the barycenter of normality representations.
\end{definition}
\par With defined SPE indices, AFR for FDC in Eq. \ref{eq_afr} can be solved with adversarial attack algorithms, which are briefly discussed in the following section.
% \par Next, we introduce the specifical algorithms to solve the optimization problem in Eq. \ref{eq_afr}.
\subsection{AFR Algorithms}
\par For different FDC models, the objective of AFR (Eq. \ref{eq_afr}) can be solved by different optimization algorithms. We mainly consider two specific algorithms: the mathematical programming that can guarantee the optimal solution and the gradient descent that is more general for multiple models.

%\footnote{AFR for fault detection involves the quadratic objective such that the problem comes MIQP.}
\subsubsection{AFR via Mathematical Programming}
\par For the linear models, we can convert AFR into a convex mixed-integer linear programming (MILP), where the global minima can be solved. Here we specify the problem of $\ell_0$ norm constraint, which is appealing since it can directly diagnose the most $\eta$ significant variables in the FDC models. 
\par Firstly, a set of auxiliary binary variables $\rho$ is introduced to formulate the $\ell_0$ norm constraint into the mixed-integer linear form:
\begin{align}
	\label{eq_norm0}
	\Vert f\xi \Vert_0 < \eta  \equiv & \bigg[  (\sum_j\rho_j <\eta) \wedge (\mathbf{x}_j'\leq \mathbf{x}_j + \rho_j \cdot \epsilon) \wedge   \notag \\
	&  (\mathbf{x}_j' \geq \mathbf{x}_j - \rho_j \cdot \epsilon) \wedge (\rho_j \in \{0,1\})  \bigg]
  \end{align}
where conjunction symbol $\wedge$ denotes logical AND, $\mathbf{x}'_j$ denotes the $j$-th variable of the reconstructed sample, and $\epsilon$ is a large predefined bound for all variables.

\par Next, the objective $SPE(\mathbf{x}')$ of linear models is formulated by:
\begin{align}
	\mathbf{x}'^T\tilde{C}\mathbf{x}'
\end{align}
where $\tilde{C}$ is the residual projection matrix. 
\par This idea to solve AFR by MILP is similar to adversarial verification works~\cite{lomuscio2017approach,tjeng2018evaluating,cheng2017maximum}, which can be extended to ReLU-based NN. However, without loss of generality, this work applies the gradient-based attack algorithms for solving AFR, which is discussed in the following.

% Unlike linear models like PCA, the formulation of neural network involves techniques to convert nonlinear activation functions. This work focuses on Rectified Linear Unit (ReLU), which is a piece-wise linear function and can be formulated into MILP without relaxtion. The formulation of ReLU NN can be referred to adversarial verification works~\cite{lomuscio2017approach,tjeng2018evaluating,cheng2017maximum}, which is not expanded in this work.

\subsubsection{AFR via Adversarial Attack}
\par For more general FDC models, we can realize AFR by utilizing the existing attack algorithms. In particular, we apply a multi-step gradient attack, Projected Gradient Descent (PGD)~\cite{PGD}, which is applicable for both $\ell_0$, $\ell_1$ and $\ell_2$ norm:
\begin{align}
	\mathbf{x}'_{t+1}=\mathbf{x}'_t - \frac{\nabla_{\mathbf{x}_t} SPE(\mathbf{x}_t) }{L}
	\label{eq_pgd}
\end{align} 
 where $\mathbf{x}'_{t}$ is the adversarial sample at $t$-th iteration and $L$ is to normalize the perturbation under certain norm.\footnote{If not specified, $AFR$ and $ABIGX$ use the $\ell_2$ norm as the default distance constraint.} 

\section{ABIGX: A Unified Framework}
\label{sec_abigx}
\par Aided by adversarial attack algorithms and the new classification SPE, AFR generalizes the fault reconstruction to general FDC models. Based on the counterfactual samples reconstructed by AFR, we are dedicated to computing the variables' contributions to the model prediction. 
\par Recalling the idea of RBC, RBC is to calculate the amount of model output (fault index) along the reconstruction vector $f_i\xi_i$. This implies the same motivation as the IG~\cite{IG} methods, which also computes the contribution amount to the model output along a path. The concrete relationship between RBC and IG will be rigorously proved in Section \ref{sec_FD}. 

\par Beyond the linear RBC and FR, IG and AFR correspondingly extend the variable contribution and fault reconstruction to general model modalities for both fault detection and classification tasks. Hence, by introducing the IG method into AFR, we propose ABIGX, which is a unified framework for explaining general FDC models. 

\subsection{Definition}
\par In general, ABIGX integrates gradients of model output along the path from the explained sample to the AFR-reconstructed sample.
\begin{definition}
	(ABIGX) ABIGX explains FDC models by attributing the variables: Given the explained sample $\mathbf{x}$ and reconstructed sample $\mathbf{x}'_{AFR}$ by AFR, the ABIGX contribution of $i$-th variable is defined as:
	\begin{align}
		\label{eq_ABIGX}
		ABIGX_i = &(\mathbf{x}_i - {\mathbf{x}'_{AFR}}_i) \times \notag \\
		& \int_{\alpha=0}^1 \frac{\partial f(\mathbf{x}'_{AFR} + \alpha(\mathbf{x} - \mathbf{x}'_{AFR}))}{\partial \mathbf{x}_i}\,d\alpha
	\end{align}
\end{definition}
\par Eq. \ref{eq_ABIGX} is a Riemann sum of Eq. \ref{eq_IG}, where the gradient is integrated along the straight-line path from $\mathbf{x}'_{AFR}$ to $\mathbf{x}$. If the reconstructed samples are solved in the one-by-one direction, the contribution can also be calculated along the direction one-by-one, which is named as ABIGX-OneVar:
\begin{definition}
	(ABIGX-OneVar) ABIGX-OneVar contribution of $i$-th variable is to integrate gradients along the path from $\mathbf{x}$ to the AFR-reconstructed sample $\mathbf{x}'^{(i)}_{AFR^{OV}}$ on variable direction $\xi_i$:

	\begin{align}
		\label{eq_ABIGX^{OV}}
		ABIGX^{OV}_i = &(\mathbf{x}_i - \mathbf{x}'^{(i)}_{AFR^{OV}}) \times \notag \\
		&   \int_{\alpha=0}^1 \frac{\partial f(\mathbf{x}'^{(i)}_{AFR^{OV}}+ \alpha(\mathbf{x} - \mathbf{x}'^{(i)}_{AFR^{OV}}))}{\partial \mathbf{x}_i}\,d\alpha  \notag \\
		 = &f_i\xi_i \times\int_{\alpha=0}^1 \frac{\partial f(\mathbf{x} + (1-\alpha)f_i\xi_i)}{\partial \mathbf{x}_i}\,d\alpha
	\end{align}
\end{definition}
\par ABIGX-OneVar omits the variables' interaction and can be regarded as the extension of RBC for general FDC models. Since both AFR and IG only require the gradient information of the model, ABIGX is a general explanation technique for FDC models.

\subsection{Fault class smearing and explainability analysis}
\par The saliency map and IG have been widely applied for the classifiers, but there is no theoretical analysis for the correctness of variable contributions for the fault classification. We put forward the \emph{fault class smearing} problem, which is derived from the recognition between different fault classes. We show that the saliency map directly suffers from fault class smearing and IG alleviates it, furthermore, ABIGX even outperforms IG and provides more accurate explanations with less fault class smearing.
\par Firstly, we construct a toy example for the fault classification. Without loss of generality, we assume different fault types happen on the different variables:
\begin{example}
	\label{examp_data}
	(Toy fault classification dataset) Let all the normal variables independently and identically sampled from a Gaussian distribution $\mathcal{N}(0,\sigma^2I)$, all faults have the same magnitude $f$ and corrupted variables subject to the distribution $\mathcal{N}(f,\sigma^2I)$. For simplicity, we assume the fault type $y$ has one corrupted variable on the direction $\xi_y$. Then the fault classification dataset with $M$ variables and $N$ fault types is defined by ($N<M$):
	 \begin{align}	
		\begin{cases}
			y = 0, \ & x_1,x_2,\cdots,x_M \overset{i.i.d}{\sim} \mathcal{N}(0,\sigma^2) \\
			y \sim \{1,2,\cdots,N\}, \ & x_y \sim \mathcal{N}(f,\sigma^2),\, \
				x_{j\neq y} \overset{i.i.d}{\sim} \mathcal{N}(0,\sigma^2) 
		\end{cases}
	 \end{align}
\end{example}
\par Secondly, we construct the linear classification model for the toy example to best separate all the classes:
\begin{definition}
	\label{def_linearclf}
	(Linear classifier) Let $\mathbf{W}=[\mathbf{w}_0,\mathbf{w}_1,\cdots,\mathbf{w}_N]^T$ be the weight of linear classifier, where its row vector $\mathbf{w}_y \in \mathbb{R}^{1\times M}$ ($y \in \{0,1,\cdots,N\}$) corresponds to the linear decision boundary of each fault (or normality) class. Based on Fisher's linear discriminant~\cite{liu2002gabor}, the optimal row vector is trained by maximizing the distance between class mean values and minimizing the variance within classes:
\begin{align}
	\label{eq_fld}
	\mathbf{w}_y^* = &\arg \max \frac{\mathbf{w}_y^T (\mathbf{\mu}_y - \overline{\mathbf{\mu}})(\mathbf{\mu}_y - \overline{\mathbf{\mu}})^T \mathbf{w}_y}{\mathbf{w}_y^T \Sigma \mathbf{w}_y} \notag \\
	& s.t.\ \Vert \mathbf{w}_y \Vert_1 = 1
\end{align}
where $\mathbf{\mu}_y$ is the mean vector of fault class $y$, $\overline{\mathbf{\mu}}$ is the mean of class means, and $\Sigma=\sum_{y=0}^{N}\sum_{\mathbf{x}\in\mathbf{x}_y}(\mathbf{x} - \overline{\mathbf{\mu}})(\mathbf{x}- \overline{\mathbf{\mu}})^T$ is the sum of covariances per class.
\end{definition}	
\par Based on the dataset in Example \ref{examp_data} and linear classifier in Definition \ref{def_linearclf}, we can prove the optimal model weights:
\begin{theorem}
	\label{theo_optweight}
		Let $w_{y,i}\in \mathbf{W}^*$ be the optimal model weight for variable $i$ on fault class $y$. For normality class $y=0$, the optimal $w_{0,i}$ is:
	\begin{align}
		w_{0,i} = \begin{cases}
			-\frac{1}{N}, & i \in \{1,\cdots,N\}\\
			0, & i \in \{N+1,\cdots,M\}
		\end{cases}
	\end{align}
	\par For fault class $y$, the optimal $w_{y,i}$ is: 
	\begin{align}
		w_{y,i} = \begin{cases}
			\frac{N}{2N-1},&\ i = y \\ 
			-\frac{1}{2N-1}, & i \in \{1,\cdots,N\}\setminus y \\
			0, & i \in \{N+1,\cdots,M\}
		\end{cases}
	\end{align}
\end{theorem}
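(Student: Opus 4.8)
The plan is to reduce the Fisher objective in Definition~\ref{def_linearclf} to an elementary rank-one Rayleigh-quotient problem by exploiting the isotropy of the class-conditional noise, solve it in closed form, and then impose the $\ell_1$ normalization to pin down the exact entries.

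First I would extract the two first-order statistics on which the objective depends. From Example~\ref{examp_data} the class means are $\mathbf{\mu}_0=\mathbf{0}$ and $\mathbf{\mu}_y=f\,\xi_y$ for $y\in\{1,\dots,N\}$ (only the $y$-th coordinate carries the fault magnitude $f$), so the mean of the class means is $\overline{\mathbf{\mu}}=\frac{f}{N+1}\sum_{j=1}^{N}\xi_j$, supported only on coordinates $1,\dots,N$. Consequently every offset $\mathbf{d}_y\equiv\mathbf{\mu}_y-\overline{\mathbf{\mu}}$ vanishes on coordinates $N+1,\dots,M$, which already forces $w_{y,i}=0$ there and accounts for the stated zero blocks.

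Next I would establish the structural fact that makes the problem tractable: within every class each variable is an independent $\mathcal{N}(\cdot,\sigma^2)$ draw, so the pooled within-class covariance is isotropic, $\Sigma\propto\mathbf{I}_M$. Since the numerator $(\mathbf{\mu}_y-\overline{\mathbf{\mu}})(\mathbf{\mu}_y-\overline{\mathbf{\mu}})^T=\mathbf{d}_y\mathbf{d}_y^T$ is rank one, the Fisher ratio collapses to $(\mathbf{w}_y^T\mathbf{d}_y)^2/(c\,\Vert\mathbf{w}_y\Vert_2^2)$ for a constant $c$, and by Cauchy--Schwarz this is maximized exactly when $\mathbf{w}_y$ is parallel to $\mathbf{d}_y$. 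Choosing the orientation $\mathbf{w}_y\propto+\mathbf{d}_y$ (the discriminant pointing toward class $y$) fixes the sign consistently across all classes, and in particular produces the negative entries of the normal class. Imposing $\Vert\mathbf{w}_y\Vert_1=1$ then amounts to dividing $\mathbf{d}_y$ by its $\ell_1$ norm: for $y=0$, $\Vert\mathbf{d}_0\Vert_1=\tfrac{Nf}{N+1}$ yields $w_{0,i}=-1/N$; for $y\ge 1$, $\mathbf{d}_y$ has the entry $\tfrac{Nf}{N+1}$ at coordinate $y$ and $-\tfrac{f}{N+1}$ at the remaining $N-1$ fault coordinates, so $\Vert\mathbf{d}_y\Vert_1=\tfrac{(2N-1)f}{N+1}$ and the normalized weights are $N/(2N-1)$ at $i=y$ and $-1/(2N-1)$ elsewhere in $\{1,\dots,N\}$, as claimed.

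I expect the main obstacle to be justifying that the denominator is effectively isotropic. The crux is that the numerator here is a single rank-one term $\mathbf{d}_y\mathbf{d}_y^T$ rather than the full between-class scatter, so the standard equivalence between using within-class scatter $S_W$ and total scatter $S_T=S_W+S_B$ in the Fisher denominator does \emph{not} transfer the optimal direction unchanged; indeed, with $\Sigma=S_T$ one checks that $\mathbf{d}_y$ is not an eigenvector (its component along the all-ones direction $\mathbf{1}$ and its component in $\mathbf{1}^{\perp}$ scale differently), so $S_T^{-1}\mathbf{d}_y\not\propto\mathbf{d}_y$ and the clean formula would fail. The resolution, consistent with the phrase ``sum of covariances per class,'' is to read $\Sigma$ as the within-class (pooled) covariance, which the i.i.d.\ Gaussian structure makes exactly $\Sigma=(N+1)\sigma^2\mathbf{I}_M\propto\mathbf{I}$; once this reading is fixed, $\Sigma^{-1}\mathbf{d}_y\propto\mathbf{d}_y$ and the remaining steps are the routine norm computations above.
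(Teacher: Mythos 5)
Your proposal is correct and follows the same skeleton as the paper's proof: show the Fisher objective forces $\mathbf{w}_y \propto \Sigma^{-1}(\mathbf{\mu}_y-\overline{\mathbf{\mu}})$, argue the denominator is isotropic so the optimal direction is $\mathbf{d}_y=\mathbf{\mu}_y-\overline{\mathbf{\mu}}$ itself, then impose $\Vert\mathbf{w}_y\Vert_1=1$. Your offsets $\frac{Nf}{N+1}$ (at $i=y$) and $-\frac{f}{N+1}$ (at the other fault coordinates) differ from the paper's stated values $f$ and $-\frac{f}{N}$ only by the common factor $\frac{N+1}{N}$, which is irrelevant after normalization, and both yield the claimed weights; your Cauchy--Schwarz treatment of the rank-one Rayleigh quotient replaces the paper's gradient computation but lands in the same place. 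Where you genuinely improve on the paper is the denominator. The paper disposes of $\Sigma$ with the assertion that ``$\Sigma^{-1}=k\cdot J$ is a matrix with all the same elements,'' which is incoherent as written (a matrix with all equal entries is rank one and cannot be an inverse), and the paper's displayed formula for $\Sigma$ takes deviations from $\overline{\mathbf{\mu}}$ rather than from the per-class means, i.e.\ it literally defines the total scatter. You correctly observe that this distinction is not cosmetic here: because the numerator is the single rank-one term $\mathbf{d}_y\mathbf{d}_y^T$ while the between-class scatter aggregates all $N+1$ classes, the standard two-class equivalence $S_T^{-1}\mathbf{d}\propto S_W^{-1}\mathbf{d}$ does not apply; indeed $S_T$ restricted to the first $N$ coordinates has the form $a\mathbf{I}+b\mathbf{1}\mathbf{1}^T$ with $b\neq 0$, and since $\mathbf{1}^T\mathbf{d}_y=\frac{f}{N+1}\neq 0$ for a fault class, $\mathbf{d}_y$ is not an eigenvector of $S_T$, so under the total-scatter reading the stated fault-class weights would not be optimal (only the normality class, whose $\mathbf{d}_0\propto\mathbf{1}$ is an eigenvector, would survive). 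Fixing the reading of $\Sigma$ as the pooled within-class covariance, which the i.i.d.\ structure makes proportional to $\mathbf{I}_M$, is exactly what the theorem needs, and your version makes that repair explicit where the paper glosses over it.
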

\par The proof of Theorem \ref{theo_optweight} is stated in \ref{append_pr1}. Next, we demonstrate the fault class smearing effect in the existing gradient-based XAI methods. We state this effect is caused by the natural optimization objective of classifiers, which would lead to incorrect variable contributions and should be minimized during the explanation.

\begin{theorem}
	(Fault class smearing in saliency map) Given the explained fault type $y$ with the ground-truth variable $y$, the saliency map contributes are smeared into other irrelative variables:
	\begin{align}
		Grad_i =\begin{cases}
			\vert w_1 \vert, &\ i = y \\ 
			\vert w_2 \vert, & i \in \{1,\cdots,N\}\setminus y \\
		\end{cases}
	\end{align}
	\par We also define the degree of fault class smearing ($FCS$) by the ratio of the sum contribution of other irrelevant variables to that of variable $y$ contribution:
	\begin{align}
	FCS_{grad} = \frac{\sum_i^{i\neq y} Grad_i} {Grad_y} = \frac{N-1}{N}
	\end{align}
\end{theorem}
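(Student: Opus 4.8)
The plan is to exploit the fact that for a linear classifier the saliency map is independent of the input sample and reduces to the corresponding weight vector, so that the whole statement follows by substituting the optimal weights already established in Theorem~\ref{theo_optweight}. First I would write the scored function in Eq.~\ref{eq_grad} explicitly: for the ground-truth class $y$ the logit is the inner product $f(\mathbf{x}) = \mathbf{w}_y \mathbf{x} = \sum_{j=1}^{M} w_{y,j}\, x_j$. Differentiating with respect to $x_i$ collapses the sum to a single term, giving $Grad_i = \partial f(\mathbf{x})/\partial x_i = w_{y,i}$, which crucially does not depend on the particular fault sample $\mathbf{x}$. This is the conceptual core of the smearing phenomenon: the saliency map of a linear model reports the decision-boundary weights of class $y$ rather than which variable is actually corrupted in the observed sample, so nonzero weights placed on competing fault directions leak into the attribution.

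Next I would substitute the optimal weights for fault class $y$ from Theorem~\ref{theo_optweight}. For $i=y$ this yields $Grad_y = w_{y,y} = \tfrac{N}{2N-1}$, which identifies the diagonal magnitude $\vert w_1\vert = \tfrac{N}{2N-1}$; for the remaining fault-indicating variables $i \in \{1,\dots,N\}\setminus y$ it yields $Grad_i = w_{y,i} = -\tfrac{1}{2N-1}$, giving the off-diagonal magnitude $\vert w_2\vert = \tfrac{1}{2N-1}$; and for $i \in \{N+1,\dots,M\}$ the weight vanishes, consistent with those variables being irrelevant and hence absent from the stated piecewise form. This recovers exactly the two-case expression for $Grad_i$ claimed in the theorem.

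Finally, for the smearing degree $FCS_{grad}$ I would sum the contribution magnitudes over all $i\neq y$. The $N-1$ off-diagonal fault variables each contribute $\vert w_2\vert = \tfrac{1}{2N-1}$ and the $M-N$ background variables contribute nothing, so the numerator equals $\tfrac{N-1}{2N-1}$; dividing by $Grad_y = \tfrac{N}{2N-1}$ cancels the common factor $2N-1$ and leaves $FCS_{grad} = \tfrac{N-1}{N}$, as asserted. I do not expect a genuine obstacle: once Theorem~\ref{theo_optweight} is available the argument is a one-line differentiation followed by substitution. The only point meriting care is the sign and absolute-value bookkeeping, since the off-diagonal weights are negative while the reported contributions $\vert w_2\vert$ and the ratio $FCS_{grad}$ are taken in magnitude; I would therefore state at the outset that the contribution score is read as $\vert Grad_i\vert$, so that the negative leakage onto competing classes still registers as positive smearing.
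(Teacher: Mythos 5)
Your proposal is correct and matches exactly the argument the paper intends: the paper states this theorem without an explicit proof, treating it as immediate from Theorem~\ref{theo_optweight} plus the observation that the saliency map of a linear classifier (Eq.~\ref{eq_grad}) is just the class-$y$ weight row, independent of the input. Your substitution of $\vert w_1\vert = \tfrac{N}{2N-1}$ and $\vert w_2\vert = \tfrac{1}{2N-1}$ and the resulting ratio $FCS_{grad} = \tfrac{N-1}{N}$ agree with the paper's notation in Eq.~\ref{eq_matrix}, and your care with absolute values is exactly the right bookkeeping.
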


\par The intrinsic cause of fault class smearing is that the fault classifier not only distinguishes the fault sample and normality but also recognizes different fault types. This leads to the non-zero smearing weights on the irrelative variables that cause other fault types (e.g., $w_2$ weights), which can effectively distinguish different fault types. On the contrary, the variables that do not cause any faults (e.g., the least $M-N$ column weights in Eq. \ref{eq_matrix}) won't smear the weights.
\par These smearing weights on the irrelevant variables is inevitable in the fault classification models. A good explanation method should effectively mitigate the influence of smearing weights on the variable contributions. Next, we discuss fault class smearing in IG methods.
\begin{theorem}
	\label{theo_fcsig}
	(Fault class smearing in IG) Given the explained fault type $y$ with the ground-truth variable $y$ and the baseline of the expectation of normality samples $\mathbf{x}'=\overrightarrow{0} $, the expectation of IG contributes are:
	\begin{align}
		\label{eq_fcsig1}
		\mathbb{E}_{\mathbf{x}\sim \mathbf{X}_y} IG_i =\begin{cases}
			\vert fw_1 \vert, &\ i = y \\ 
			\vert\frac{\sigma\sqrt{2}}{\sqrt{\pi}} w_2 \vert, & i \in \{1,\cdots,N\}\setminus y \\
		\end{cases}
	\end{align}
	\par The degree of fault class smearing ($FCS$) in IG is:
	\begin{align}
		\label{eq_fcsig2}
	FCS_{IG} = \frac{N-1}{N} \frac{\sigma\sqrt{2} }{f\sqrt{\pi}} < FCS_{grad}
	\end{align}
\end{theorem}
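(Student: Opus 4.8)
The plan is to exploit the linearity of the classifier to evaluate the path integral in closed form, then reduce the expectation to two elementary Gaussian moments. Since the explained function $f(\mathbf{x}) = \mathbf{w}_y^T\mathbf{x}$ is linear, its gradient $\partial f/\partial x_i = w_{y,i}$ is constant along the straight-line path $\gamma(\alpha) = \alpha\mathbf{x}$ from the baseline $\mathbf{x}' = \overrightarrow{0}$. Substituting into Eq. \ref{eq_IG} with $\partial\gamma_i/\partial\alpha = x_i$, the integral collapses to $IG_i = w_{y,i} x_i \int_0^1 d\alpha = w_{y,i} x_i$. This is the crucial structural difference from the saliency map: IG reweights each constant gradient $w_{y,i}$ by the corresponding input coordinate $x_i$, which is exactly the mechanism that will suppress the smearing onto low-activation variables.

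First I would take the expectation of the attribution magnitude $|IG_i| = |w_{y,i}|\,|x_i|$ over $\mathbf{x}\sim\mathbf{X}_y$, splitting the analysis into the ground-truth coordinate and the remaining faulty coordinates. For $i=y$, the coordinate is distributed as $\mathcal{N}(f,\sigma^2)$; under the fault-dominant regime $f\gg\sigma$ the variable is positive almost surely, so $\mathbb{E}[|x_y|]\approx f$ and $\mathbb{E}[|IG_y|] = |w_1|f = |fw_1|$. For $i\in\{1,\dots,N\}\setminus y$, the coordinate is $\mathcal{N}(0,\sigma^2)$, whose half-normal (folded-Gaussian) mean is $\mathbb{E}[|x_i|] = \sigma\sqrt{2/\pi}$, giving $\mathbb{E}[|IG_i|] = |w_2|\,\sigma\sqrt{2}/\sqrt{\pi}$. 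This establishes Eq. \ref{eq_fcsig1}.

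Next I would substitute the optimal weights $w_1 = N/(2N-1)$ and $w_2 = -1/(2N-1)$ from Theorem \ref{theo_optweight} into the smearing ratio $FCS_{IG} = \frac{\sum_{i\neq y}\mathbb{E}[|IG_i|]}{\mathbb{E}[|IG_y|]}$. The common $(2N-1)$ factors cancel, leaving $FCS_{IG} = \frac{N-1}{N}\frac{\sigma\sqrt{2}}{f\sqrt{\pi}}$, which is Eq. \ref{eq_fcsig2}. Comparing against the saliency-map value $FCS_{grad} = (N-1)/N$ from the preceding theorem, the strict inequality $FCS_{IG} < FCS_{grad}$ follows precisely when $\sigma\sqrt{2}/(f\sqrt{\pi}) < 1$, i.e. $f > \sigma\sqrt{2/\pi}\approx 0.80\,\sigma$, a condition already implied by the fault-dominant regime invoked above.

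The main obstacle is the honest treatment of the ground-truth coordinate: exactly, $\mathbb{E}[|x_y|]$ for $x_y\sim\mathcal{N}(f,\sigma^2)$ equals the folded-normal mean $\sigma\sqrt{2/\pi}\,e^{-f^2/(2\sigma^2)} + f\,\mathrm{erf}(f/(\sigma\sqrt{2}))$, not $f$ itself. I expect to justify the replacement $\mathbb{E}[|x_y|]\approx f$ through the $f\gg\sigma$ assumption (the exponential correction term vanishes and the argument of the error function drives $\mathrm{erf}\to 1$), and to note that this same assumption fixes the exact domain in which the claimed inequality $FCS_{IG}<FCS_{grad}$ is valid. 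The remaining ingredients — the collapse of the path integral and the half-normal moment $\mathbb{E}[|x_i|]=\sigma\sqrt{2/\pi}$ — are routine.
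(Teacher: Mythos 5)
Your proposal is correct and follows essentially the same route as the paper's proof: exploit linearity so that $IG_i = w_{y,i}x_i$, take expectations using the half-Gaussian moment $\sigma\sqrt{2/\pi}$ for the irrelevant coordinates and the approximation $\mathbb{E}\vert x_y\vert \approx f$ (justified by $f\gg\sigma$) for the ground-truth coordinate, then substitute the weights from Theorem \ref{theo_optweight} to obtain the ratio and the inequality. Your explicit folded-normal correction term and the precise threshold $f>\sigma\sqrt{2/\pi}$ for the strict inequality are slightly more careful than the paper's ``assume $x_y$ is always greater than zero,'' but they do not change the argument.
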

\par The proof of Theorem \ref{theo_fcsig} is stated in \ref{append_pr2}. Theorem \ref{theo_fcsig} shows the IG mitigates the fault class smearing by introducing the input variable difference between the fault and normality. Since the fault variable always has a larger offset than other variables, its integrated gradient is also more significant.
\par Subsequently, we analyze fault class smearing in ABIGX, showing that ABIGX outperforms IG with a lower FCS degree.
\begin{theorem}
	\label{theo_abigx}
	(Fault class smearing in ABIGX) The fault class smearing degree in ABIGX is lower than that in IG:
	\begin{align}
		 FCS_{ABIGX}< FCS_{IG}  \notag
	\end{align}
\end{theorem}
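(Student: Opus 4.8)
The plan is to exploit the linearity of the classifier so that, exactly as in the proof of Theorem~\ref{theo_fcsig}, the path integral in the definition of $ABIGX_i$ collapses. For the linear model $f(\mathbf{x}) = \mathbf{w}_y^T\mathbf{x}$ the gradient $\partial f/\partial x_i = w_{y,i}$ is constant along the straight-line path, so Eq.~\eqref{eq_ABIGX} reduces to $ABIGX_i = (\mathbf{x}_i - {\mathbf{x}'_{AFR}}_i)\,w_{y,i}$. This is structurally identical to the IG contribution used in Theorem~\ref{theo_fcsig}, with the single difference that the baseline is $\mathbf{x}'_{AFR}$ rather than $\overrightarrow{0}$. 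Consequently the whole comparison reduces to comparing the two offset vectors: IG integrates over $\mathbf{x} - \overrightarrow{0} = \mathbf{x}$, whereas ABIGX integrates over $\mathbf{x} - \mathbf{x}'_{AFR}$. Writing $w_1 = N/(2N-1)$ and $w_2 = -1/(2N-1)$ as in Theorem~\ref{theo_optweight}, I would express
\begin{align}
	FCS_{ABIGX} = \frac{\sum_{i\neq y}\mathbb{E}\,\vert(\mathbf{x}_i - {\mathbf{x}'_{AFR}}_i)\,w_2\vert}{\mathbb{E}\,\vert(\mathbf{x}_y - {\mathbf{x}'_{AFR}}_y)\,w_1\vert},
\end{align}
so that everything hinges on the expected absolute offsets produced by the AFR baseline.

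Next I would characterize $\mathbf{x}'_{AFR}$ for the toy model. Since the normal variables are mean-zero, the barycenter of normality representations vanishes and the classification index becomes $SPE_{FC}(\mathbf{x}') = \Vert \mathbf{W}\mathbf{x}'\Vert_2^2$. A fault of type $y$ perturbs the representation along the column $\mathbf{c}_y = \mathbf{W}\xi_y$, and the most distance-efficient way to drive $\mathbf{W}\mathbf{x}'$ back toward the origin is to reduce the corrupted coordinate $x_y$. I would show that the constrained minimizer therefore concentrates its correction on variable $y$: with sufficient budget $\eta$ it sets ${\mathbf{x}'_{AFR}}_y \approx 0$, so that $\mathbf{x}_y - {\mathbf{x}'_{AFR}}_y \approx f$ --- matching the denominator scale $f\vert w_1\vert$ already obtained for IG --- while leaving the uncorrupted coordinates essentially untouched, ${\mathbf{x}'_{AFR}}_i \approx \mathbf{x}_i$ for $i\neq y$.

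The comparison then follows from a single observation: IG's fixed zero baseline turns the pure noise realization $x_i \sim \mathcal{N}(0,\sigma^2)$ on each off-variable into a spurious offset of expected magnitude $\mathbb{E}\vert x_i\vert = \sigma\sqrt{2/\pi}$, which is precisely the factor appearing in $FCS_{IG}$. The AFR baseline, by tracking these noise coordinates instead of anchoring them at zero, makes the off-variable offset $\mathbf{x}_i - {\mathbf{x}'_{AFR}}_i$ strictly smaller in expectation than $\sigma\sqrt{2/\pi}$ (and exactly zero for the sparse $\ell_0$/$\ell_1$ reconstructions). Since the variable-$y$ contribution is comparable in both methods, this strictly shrinks the numerator of the smearing ratio and yields $FCS_{ABIGX} < FCS_{IG}$.

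The main obstacle is making the concentration claim for $\mathbf{x}'_{AFR}$ rigorous under the default $\ell_2$ budget. Whereas an $\ell_0$-constrained reconstruction with $\eta=1$ trivially corrects only variable $y$, the minimum-$\ell_2$-norm correction of $\Vert\mathbf{W}\mathbf{x}'\Vert_2^2$ can leak onto the correlated fault variables $i\in\{1,\dots,N\}\setminus y$ through the cross terms $\mathbf{c}_i^T\mathbf{c}_y$ of the weight matrix. Controlling this leakage --- bounding the induced off-variable offsets and verifying that their expected magnitude remains below the half-normal value $\sigma\sqrt{2/\pi}$ that IG incurs --- is the crux of the argument and where the remaining calculation would concentrate.
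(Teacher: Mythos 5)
Your first step---collapsing the path integral by linearity so that $ABIGX_i = (\mathbf{x}_i - {\mathbf{x}'_{AFR}}_i)\,w_{y,i}$ and the whole comparison reduces to the offset vectors---is exactly how the paper's proof begins, and you have correctly located the crux: everything hinges on what $\mathbf{x}-\mathbf{x}'_{AFR}$ looks like under the default $\ell_2$ constraint. But you leave that crux open, and the characterization you propose to fill it with is wrong. For the classifier of Theorem \ref{theo_optweight}, the classification SPE is $\Vert\mathbf{W}\mathbf{x}'\Vert_2^2$, and the submatrix of $\mathbf{W}$ on the first $N$ columns has full rank $N$; hence $SPE_{FC}(\mathbf{x}')=0$ forces \emph{all} of the first $N$ coordinates of $\mathbf{x}'$ to vanish, not just coordinate $y$. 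So the exact $\ell_2$ minimizer ``with sufficient budget'' does not leave the uncorrupted coordinates untouched: it sets ${\mathbf{x}'_{AFR}}_i=0$ for every $i\le N$, the ABIGX offsets then coincide exactly with IG's offsets $\mathbf{x}_i-0$ on all weighted coordinates, and you get $FCS_{ABIGX}=FCS_{IG}$---equality, not the strict inequality claimed. Your concentration picture is only valid for the $\ell_0$ ($\eta=1$) reconstruction, which is not the regime the theorem (or the paper's default $\ell_2$ ABIGX) is about.

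The paper resolves the crux differently: it models AFR as a one-step gradient attack (Eq. \ref{eq_fcsafr}), so the offset is $\mathbf{x}-\mathbf{x}'_{AFR}=2\eta\,\mathbf{W}^T\mathbf{W}\mathbf{x}$, and then computes the Hessian entries explicitly: on the first $N$ coordinates the diagonal is $w_3=\frac{1}{N^2}+\frac{N^2+N-1}{(2N-1)^2}>0$ and the off-diagonal is $w_4=\frac{1}{N^2}-\frac{N+2}{(2N-1)^2}<0$. The mechanism producing strictness is therefore not that AFR ``tracks'' the noise coordinates, but that the \emph{negative} cross terms $w_4$ suppress (indeed sign-flip) the irrelevant-variable offset $fw_4+x'w_3+(N-2)x'w_4$ relative to IG's half-normal offset $x'$; the comparison reduces to $f^2w_4+(N-2)fx'w_4<(N-1){x'}^2w_4$, which flips under $w_4<0$ to $f^2+(N-2)fx'>(N-1){x'}^2$ and holds because $f>x'$. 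To repair your argument you would have to carry out essentially this Hessian computation in the budget-limited regime (the constrained minimizer is the ridge-type solution $\mathbf{x}-\mathbf{x}'=(\mathbf{W}^T\mathbf{W}+\mu\mathbf{I})^{-1}\mathbf{W}^T\mathbf{W}\mathbf{x}$, whose $\mu\to\infty$ limit is the paper's gradient step), and you should expect the strict inequality to degrade toward equality as the budget grows---so bounding the ``leakage'' below $\sigma\sqrt{2/\pi}$, as you propose, cannot succeed uniformly in $\eta$.
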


\par The proof of Theorem \ref{theo_abigx} is stated in \ref{append_pr3}. Theorem \ref{theo_abigx} shows ABIGX effectively mitigates the fault class smearing problem by utilizing the gradient of classification SPE objective and outperforms IG which simply introduces the variable difference in the input space.
\par \textbf{Fault class smearing v.s. fault smearing:} Fault smearing~\cite{alcala2009reconstruction,westerhuis2000generalized} previously discussed in CP and RBC works is caused by the same weights shared by all the inputs, since the detection models only output one scalar (e.g., SPE fault index). Though the model weights of each fault class are unique, we show that the classifiers still weigh on the irrelevant variables (due to the task for distinguish different fault types). These non-zero irrelevant weights smear the variable contributions. In summary, fault class smearing demonstrates the phenomenon and cause of the fault smearing problem in fault classification.

\begin{figure}
	\centering
\includegraphics[width=0.48\textwidth]{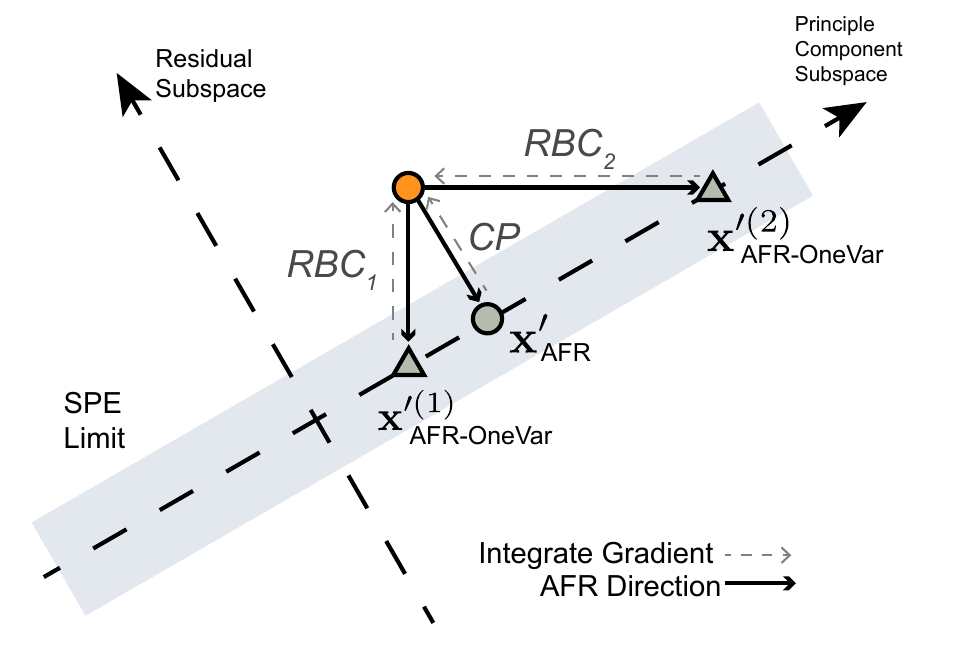}
\caption{Illustration of ABIGX on 2-D PCA-based detection: CP is identical to the integrated gradients between the explained sample and $\mathbf{x}'_{AFR}$; RBC is identical to the integrated gradients between the explained sample and $\mathbf{x}'_{AFR^{OV}}$ on each variable direction. }
\label{fig_abigx2d}
\end{figure}

\section{Fault Diagnosis: ABIGX Perspective}
\label{sec_FD}
\par This section presents how ABIGX uniformly reframes the two basic fault diagnosis methods, CP and RBC. For the PCA-based fault detection, we state that both CP and RBC are the specifications of ABIGX. Fig \ref{fig_abigx2d} intuitively shows the identicality of CP and RBC to ABIGX with two specific AFR-reconstructed samples, $\mathbf{x}'_{AFR}$ and $\mathbf{x}'_{AFR^{OV}}$.
\par In the following, the PCA-based detector is denoted by its residual subspace projection matrix, $\tilde{\mathbf{C}} = \mathbf{I}-\mathbf{P}\mathbf{P}^T$.

\subsection{RBC and ABIGX-OneVar}
\par Section \ref{sec_abigx} discusses that ABIGX is derived from the idea of RBC. Furthermore, we prove that RBC is a linear specification of ABIGX with the AFR-OneVar.

\begin{lemma}
	\label{prop_RBCpro}
	(Fault reconstruction properties~\cite{alcala2009reconstruction}) The value of $f_i$ in fault reconstruction such that $SPE(\mathbf{x}-f_i\xi_i)$ is minimized is:
	\begin{align}
		\label{eq_propfi}
		f_i = (\xi_i^T\tilde{\mathbf{C}}\xi_i)^{-1}\xi_i^T\tilde{\mathbf{C}}\mathbf{x}
	\end{align}

\end{lemma}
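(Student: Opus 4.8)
The plan is to recognize the minimization in Eq.~\ref{eq_faultrecon} as a one-dimensional convex quadratic problem in the scalar $f_i$ and to solve its first-order optimality condition in closed form. First I would invoke the quadratic-form expression of the PCA detection SPE already derived in Eq.~\ref{eq_rbcdef}, namely $SPE(\mathbf{z}) = \mathbf{z}^T\tilde{\mathbf{C}}\mathbf{z}$ with $\tilde{\mathbf{C}} = \mathbf{I}-\mathbf{P}\mathbf{P}^T$, and substitute the reconstructed sample $\mathbf{z}=\mathbf{x}-f_i\xi_i$. Using the symmetry of the projection matrix ($\tilde{\mathbf{C}}^T=\tilde{\mathbf{C}}$), the objective expands into a scalar quadratic in $f_i$:
\begin{align}
	SPE(\mathbf{x}-f_i\xi_i) = \mathbf{x}^T\tilde{\mathbf{C}}\mathbf{x} - 2 f_i\, \xi_i^T\tilde{\mathbf{C}}\mathbf{x} + f_i^2\, \xi_i^T\tilde{\mathbf{C}}\xi_i \notag
\end{align}

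Next I would treat this as an elementary single-variable optimization. Differentiating with respect to $f_i$ and setting the derivative to zero yields the stationarity condition $2 f_i\,\xi_i^T\tilde{\mathbf{C}}\xi_i - 2\,\xi_i^T\tilde{\mathbf{C}}\mathbf{x}=0$, which rearranges directly to $f_i = (\xi_i^T\tilde{\mathbf{C}}\xi_i)^{-1}\xi_i^T\tilde{\mathbf{C}}\mathbf{x}$, matching the claimed expression in Eq.~\ref{eq_propfi}. The final step is to certify that this stationary point is indeed the global minimizer: since $\tilde{\mathbf{C}}$ is a symmetric idempotent projector it is positive semidefinite, so the leading coefficient $\xi_i^T\tilde{\mathbf{C}}\xi_i$ is nonnegative and the quadratic is convex; the second derivative $2\,\xi_i^T\tilde{\mathbf{C}}\xi_i$ confirms a minimum.

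The only genuine subtlety—and the point I would flag as the main obstacle—is guaranteeing that the scalar $\xi_i^T\tilde{\mathbf{C}}\xi_i$ is strictly positive, so that its inverse in Eq.~\ref{eq_propfi} is well defined and the minimizer is unique. This quantity is the residual-subspace energy of the reconstruction direction, and it vanishes precisely when $\xi_i$ lies entirely in the principal subspace (i.e. $\tilde{\mathbf{C}}\xi_i=\overrightarrow{0}$), in which case the $i$-th variable leaves no footprint in the residual and reconstruction along it is vacuous. I would therefore state the standard regularity assumption that each reconstruction direction has a nonzero residual component, under which $\xi_i^T\tilde{\mathbf{C}}\xi_i>0$, the objective is strictly convex, and the closed-form $f_i$ is the unique global minimizer, completing the proof.
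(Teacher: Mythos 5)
Your proof is correct and is essentially the standard derivation: the paper itself states this lemma without proof (deferring to the cited RBC reference of Alcala and Qin), and that reference obtains $f_i$ in exactly the same way, by writing the PCA SPE as the quadratic form $(\mathbf{x}-f_i\xi_i)^T\tilde{\mathbf{C}}(\mathbf{x}-f_i\xi_i)$ and solving the first-order condition in the scalar $f_i$. Your added remark that $\xi_i^T\tilde{\mathbf{C}}\xi_i>0$ (i.e.\ $\xi_i$ must have a nonzero residual-subspace component) is needed for the inverse to exist is a legitimate regularity condition, matching the fault-recoverability assumption made in that literature.
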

\begin{theorem}
\label{theo_rbc_ig} (Identicality between RBC and ABIGX) RBC is identical to ABIGX with one-variable AFR (i.e., ABIGX-OneVar):
\begin{align}
	RBC_i = ABIGX^{OV}_i
\end{align}

\end{theorem}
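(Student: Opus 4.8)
The plan is to specialize the objective $f$ in the $ABIGX^{OV}_i$ integral to the PCA detection index and evaluate the resulting line integral in closed form. First I would use that $\tilde{\mathbf{C}}=\mathbf{I}-\mathbf{P}\mathbf{P}^T$ is a symmetric idempotent projection, so that the SPE of Eq.~\ref{eq_SPE} reduces to the quadratic form $SPE(\mathbf{z})=\mathbf{z}^T\tilde{\mathbf{C}}\mathbf{z}$ with gradient $\nabla SPE(\mathbf{z})=2\tilde{\mathbf{C}}\mathbf{z}$. The $i$-th partial derivative entering the integrand is therefore $\xi_i^T\nabla SPE(\mathbf{z})=2\,\xi_i^T\tilde{\mathbf{C}}\mathbf{z}$, an affine function of $\mathbf{z}$, which is exactly what makes the straight-line integral elementary.

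Next I would substitute the one-variable reconstruction. By Eq.~\ref{eq_faultrecon} the reconstructed sample equals $\mathbf{x}-f_i\xi_i$, so the prefactor of the $ABIGX^{OV}_i$ integral reduces to $f_i$ and the integration path is $\gamma(\alpha)=\mathbf{x}-(1-\alpha)f_i\xi_i$. Plugging $\gamma(\alpha)$ into the integrand gives $2\,\xi_i^T\tilde{\mathbf{C}}\mathbf{x}-2(1-\alpha)f_i\,\xi_i^T\tilde{\mathbf{C}}\xi_i$; integrating over $\alpha\in[0,1]$ turns the $(1-\alpha)$ factor into $1/2$ and leaves $2\,\xi_i^T\tilde{\mathbf{C}}\mathbf{x}-f_i\,\xi_i^T\tilde{\mathbf{C}}\xi_i$. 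Multiplying by the prefactor $f_i$ then yields $ABIGX^{OV}_i=f_i\big(2\,\xi_i^T\tilde{\mathbf{C}}\mathbf{x}-f_i\,\xi_i^T\tilde{\mathbf{C}}\xi_i\big)$.

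The decisive step is to invoke the fault reconstruction property of Lemma~\ref{prop_RBCpro}, which I rewrite as the optimality identity $\xi_i^T\tilde{\mathbf{C}}\mathbf{x}=f_i\,\xi_i^T\tilde{\mathbf{C}}\xi_i$. Substituting it into the first term collapses the bracket to $f_i\,\xi_i^T\tilde{\mathbf{C}}\xi_i$, so that $ABIGX^{OV}_i=f_i^2\,\xi_i^T\tilde{\mathbf{C}}\xi_i=(f_i\xi_i)^T\tilde{\mathbf{C}}(f_i\xi_i)$, which is precisely $RBC_i$ of Eq.~\ref{eq_rbcdef}.

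I expect this last cancellation, together with the sign bookkeeping, to be the only delicate point: before the Lemma is used the bracket still carries the raw residual projection $\xi_i^T\tilde{\mathbf{C}}\mathbf{x}$ and does not visibly equal $RBC_i$, and it is exactly the fact that $f_i$ is the SPE-minimizing magnitude that fuses the two terms into the clean square. I would therefore fix the reconstruction sign convention (reconstructed sample $\mathbf{x}-f_i\xi_i$, as in Eq.~\ref{eq_faultrecon} and Lemma~\ref{prop_RBCpro}) before computing, since the opposite convention would flip a sign in the $1/2$ term and break the identity; every remaining step is routine because the PCA SPE is exactly quadratic.
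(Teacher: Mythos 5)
Your proof is correct and follows essentially the same route as the paper: both evaluate the ABIGX-OneVar line integral in closed form using the quadratic PCA SPE $\mathbf{z}^T\tilde{\mathbf{C}}\mathbf{z}$ along the one-variable reconstruction path, and then invoke Lemma~\ref{prop_RBCpro} to identify the result with $RBC_i$. The only cosmetic difference is that you substitute the optimality identity $\xi_i^T\tilde{\mathbf{C}}\mathbf{x}=f_i\,\xi_i^T\tilde{\mathbf{C}}\xi_i$ to collapse the bracket directly into $(f_i\xi_i)^T\tilde{\mathbf{C}}(f_i\xi_i)$, whereas the paper substitutes the closed form of $f_i$ into both $ABIGX^{OV}_i$ and $RBC_i$ and matches the two resulting expressions in $\mathbf{x}$.
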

\begin{proof}
	 Recalling the ABIGX-OneVar definition in Eq. \ref{eq_ABIGX^{OV}} and IG definition in Eq. \ref{eq_IG}, the integrated gradient along each variable direction can be rewrote by integrating over reconstruction magnitude $f_i$:
	 \begin{align}
		ABIGX^{OV}_i &= - \int_{f_i}^{0} \frac{\partial f(\mathbf{x}-f\xi_i)}{\partial x_i}\,df \notag \\
			&=  \int_{0}^{f_i} 2\mathbf{\xi_i}^T \tilde{\mathbf{C}}(\mathbf{x}-f\xi_i)\,df \notag \\
			&= f_i\mathbf{\xi}_i^T \tilde{\mathbf{C}}\mathbf{x} \notag \\
			&= \mathbf{x}^T\tilde{\mathbf{C}}\xi_i(\xi_i^T\tilde{\mathbf{C}}\xi)_i^{-1}\mathbf{\xi}_i^T \tilde{\mathbf{C}}\mathbf{x}
	\end{align}
where $f_i$ is substituted by Eq. \ref{eq_propfi}. Similarly, substitute Eq. \ref{eq_propfi} to RBC expression (eq. \ref{eq_rbcdef}), the RBC contribution on $\xi_i$ is:
\begin{align}
	RBC_i &= \mathbf{x}^T\tilde{\mathbf{C}}\xi_i(\xi_i^T\tilde{\mathbf{C}}\xi)_i^{-1}\mathbf{\xi}_i^T \tilde{\mathbf{C}}  \mathbf{\xi}_i (\xi_i^T\tilde{\mathbf{C}}\xi)_i^{-1}\mathbf{\xi}_i^T \tilde{\mathbf{C}}\mathbf{x} \notag \\
	&=\mathbf{x}^T\tilde{\mathbf{C}}\xi_i(\xi_i^T\tilde{\mathbf{C}}\xi)_i^{-1}\mathbf{\xi}_i^T \tilde{\mathbf{C}}\mathbf{x}
\end{align}
which is identical to $ABIGX^{OV}_i$. From another perspective, ABIGX-OneVar contribution can be expressed by directly integrating over the sample $\mathbf{x}^*$:
	 \begin{align}
		ABIGX^{OV}_i &= \int_{\mathbf{x}-f_i\xi_i}^{\mathbf{x}} \frac{\partial SPE(\mathbf{x}^*)}{\partial x_i}\,d\mathbf{x}^* \notag \\
			&= SPE(\mathbf{x}) - SPE(\mathbf{x}-f_i\xi_i)
	\end{align}
which is consistent with the RBC property stated in Eq. 21 of RBC paper~\cite{alcala2009reconstruction}.
\end{proof}

\subsection{CP and ABIGX}
\par We prove that CP is also a linear specification of ABIGX with $\ell_2$ norm AFR. Firstly, we calculate the AFR-reconstructed sample for PCA under $\ell_2$ norm distance:
\begin{lemma} 
	\label{lem_afrpca}	
	(AFR for PCA) Under $\ell_2$ norm distance constraint, the AFR-reconstructed sample $\mathbf{x}'_{AFR}$ for PCA-based fault detection is:
	\begin{align}
		\label{eq_afrpca}
		\mathbf{x}'_{AFR} = \mathbf{x} - \tilde{\mathbf{C}}\mathbf{x}
	\end{align}
which means the reconstruction vector is exactly the residual subspace projection.
\end{lemma}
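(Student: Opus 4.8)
The plan is to substitute the PCA form of the residual index into the AFR program and exploit the projection structure of $\tilde{\mathbf{C}}$. Starting from Definition~\ref{def_afr} with $p=2$ and the PCA detector, the objective becomes $SPE(\mathbf{x}') = \Vert \mathbf{x}' - \mathbf{P}\mathbf{P}^T\mathbf{x}'\Vert_2^2 = (\mathbf{x}')^T\tilde{\mathbf{C}}\mathbf{x}'$, so AFR reads $\min_{\mathbf{x}'} (\mathbf{x}')^T\tilde{\mathbf{C}}\mathbf{x}'$ subject to $\Vert \mathbf{x}-\mathbf{x}'\Vert_2 \le \eta$. The first fact I would record is that $\tilde{\mathbf{C}}=\mathbf{I}-\mathbf{P}\mathbf{P}^T$ is an orthogonal projector: using $\mathbf{P}^T\mathbf{P}=\mathbf{I}_l$ it is symmetric and idempotent ($\tilde{\mathbf{C}}^2=\tilde{\mathbf{C}}$), with every eigenvalue in $\{0,1\}$. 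Consequently $SPE(\mathbf{x}')=\Vert\tilde{\mathbf{C}}\mathbf{x}'\Vert_2^2 \ge 0$, with equality exactly when $\tilde{\mathbf{C}}\mathbf{x}'=\mathbf{0}$, i.e. when $\mathbf{x}'$ lies in the principal subspace $\mathrm{range}(\mathbf{P})$.

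Next I would convert the question from ``which point minimizes SPE'' to ``which feasible point reaches the principal subspace with the least displacement''. Writing $\mathbf{x}'=\mathbf{x}+\mathbf{d}$ and expanding gives $SPE(\mathbf{x}')=\mathbf{x}^T\tilde{\mathbf{C}}\mathbf{x}+2\mathbf{d}^T\tilde{\mathbf{C}}\mathbf{x}+\mathbf{d}^T\tilde{\mathbf{C}}\mathbf{d}$. Because $\tilde{\mathbf{C}}$ annihilates the principal subspace, any component of $\mathbf{d}$ lying in $\mathrm{range}(\mathbf{P})$ leaves the objective unchanged while consuming budget; hence the efficient perturbation lies entirely in the residual subspace and is aligned with $-\tilde{\mathbf{C}}\mathbf{x}$. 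Setting $\mathbf{d}=-\tilde{\mathbf{C}}\mathbf{x}$ cancels the residual component, yielding $\mathbf{x}'=\mathbf{x}-\tilde{\mathbf{C}}\mathbf{x}=\mathbf{P}\mathbf{P}^T\mathbf{x}$ with $SPE=0$, the global minimum. By the projection theorem this is exactly the orthogonal projection of $\mathbf{x}$ onto $\mathrm{range}(\mathbf{P})$, the nearest point of that subspace to $\mathbf{x}$, attained with displacement magnitude $\Vert\tilde{\mathbf{C}}\mathbf{x}\Vert_2$.

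Finally I would close the feasibility and optimality loop. The minimal $\ell_2$ adjustment restoring normality ($SPE=0$) has magnitude $\Vert\tilde{\mathbf{C}}\mathbf{x}\Vert_2$, so provided the budget satisfies $\eta \ge \Vert\tilde{\mathbf{C}}\mathbf{x}\Vert_2$ the candidate $\mathbf{x}-\tilde{\mathbf{C}}\mathbf{x}$ is feasible; since it attains the unconstrained lower bound $SPE=0$, it is automatically a global optimum of the constrained program, which establishes $\mathbf{x}'_{AFR}=\mathbf{x}-\tilde{\mathbf{C}}\mathbf{x}$ and hence that the reconstruction vector equals the residual-subspace projection $\tilde{\mathbf{C}}\mathbf{x}$.

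The step I expect to be the main obstacle is singling out the orthogonal projection among the admissible minimizers: once $\eta > \Vert\tilde{\mathbf{C}}\mathbf{x}\Vert_2$, the whole intersection of the $\eta$-ball with $\mathrm{range}(\mathbf{P})$ achieves $SPE=0$, so the bare objective does not pin down a unique reconstruction. I would resolve this by invoking the fault-reconstruction principle that AFR seeks the \emph{minimal} adjustment bringing the index back to normality --- equivalently the binding case $\eta=\Vert\tilde{\mathbf{C}}\mathbf{x}\Vert_2$, where the ball is tangent to the principal subspace --- so the minimizer is unique and equals $\mathbf{x}-\tilde{\mathbf{C}}\mathbf{x}$. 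A short KKT/Lagrangian check, whose stationarity condition forces $\mathbf{d}\parallel\tilde{\mathbf{C}}\mathbf{x}$, can be appended to confirm that no other stationary point attains a smaller index.
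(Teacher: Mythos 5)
Your proof is correct, but it takes a genuinely different route from the paper's. The paper immediately swaps objective and constraint: arguing that the minimum of SPE is zero, it recasts AFR as $\arg\min \Vert \mathbf{x}-\mathbf{x}'\Vert_2$ subject to $SPE(\mathbf{x}')=0$, forms the Lagrangian $(\mathbf{x}-\mathbf{x}')^T(\mathbf{x}-\mathbf{x}')+\lambda\,\mathbf{x}'^T\tilde{\mathbf{C}}\mathbf{x}'$, and reads off from the first-order conditions that the reconstruction vector $\mathbf{x}-\mathbf{x}'$ is orthogonal to the principal component subspace, hence equals $\tilde{\mathbf{C}}\mathbf{x}$. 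You instead stay with the stated program ($\min SPE$ s.t.\ $\Vert\mathbf{x}-\mathbf{x}'\Vert_2\le\eta$), exploit the projector structure of $\tilde{\mathbf{C}}$, exhibit $\mathbf{x}-\tilde{\mathbf{C}}\mathbf{x}$ as a feasible point attaining the unconstrained lower bound $SPE=0$ when $\eta\ge\Vert\tilde{\mathbf{C}}\mathbf{x}\Vert_2$, and only then invoke the minimal-adjustment (binding-constraint) principle to single it out among the non-unique zero-SPE feasible points --- which is exactly the reformulation the paper adopts from the outset, so the two arguments converge on the same selection rule, just applied at opposite ends of the proof. Your route buys explicit bookkeeping of the budget $\eta$ (including the alignment $\mathbf{d}\parallel-\tilde{\mathbf{C}}\mathbf{x}$ when the ball cannot reach the principal subspace) and, more importantly, it sidesteps a technical weakness in the paper's Lagrangian step: on the feasible set one has $\tilde{\mathbf{C}}\mathbf{x}'=\mathbf{0}$, so the constraint gradient vanishes and the stationarity condition $\mathbf{x}-\mathbf{x}'=\lambda\tilde{\mathbf{C}}\mathbf{x}'$, read literally, degenerates to $\mathbf{x}'=\mathbf{x}$; the clean way to close that gap is precisely your projection-theorem construction. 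What the paper's swap buys in exchange is automatic uniqueness and a one-line orthogonality characterization with no case analysis on $\eta$.
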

\begin{proof}
	Based on Definition \ref{def_afr}, since the minimal of SPE is always zero, we can solve $\mathbf{x}'_{AFR}$ by minimizing the reconstruction vector:
	\begin{align}
		\label{eq_cpafr}
		\mathbf{x}'_{AFR} = &\arg \min \Vert \mathbf{x} - \mathbf{x}'_{AFR}\Vert_2 \notag \\
		&s.t.\ SPE(\mathbf{x}'_{AFR}) =0
	\end{align}
\par We can construct the Lagrangian function by introducing Lagrange multiplier $\lambda$ (omit the subscript of $\mathbf{x}'_{AFR}$):
\begin{align}
	L(\mathbf{x}',\lambda) = ( \mathbf{x} - \mathbf{x}')^T (\mathbf{x} - \mathbf{x}') + \lambda \mathbf{x}'^T\tilde{\mathbf{C}}\mathbf{x}'
\end{align}
\par The minimal value is obtained when:
\begin{align}
	\begin{cases}
		\label{eq_constr}
		\nabla_{\mathbf{x}'} L(\mathbf{x}',\lambda) = 0 \\
		\mathbf{x}'^T\tilde{\mathbf{C}}\mathbf{x}' = 0
	\end{cases}
\end{align}
which turns to:
\begin{align}
	\label{eq_larg}
	\mathbf{x} - \mathbf{x}' = \lambda \tilde{\mathbf{C}}\mathbf{x}'\notag \\
	\mathbf{x}'^T (\mathbf{x} - \mathbf{x}') = 0
\end{align}
\par The reconstructed sample $\mathbf{x}'$ with zero SPE should be orthogonal to the residual subspace and Eq. \ref{eq_larg} implies the optimal reconstruction direction $\mathbf{x} - \mathbf{x}'$ is orthogonal to the reconstructed sample $\mathbf{x}'$. Hence, we can obtain the reconstruction vector is orthogonal to the principal component subspace, which is exactly the residual projection of explained sample, $\tilde{\mathbf{C}}\mathbf{x}$. 
\end{proof}
\par Subsequently, we can prove the identicality between CP and ABIGX.
\begin{theorem}
(Identicality between CP and ABIGX) CP is identical to ABIGX with $\ell_2$ norm AFR:
\begin{align}
	CP_i = ABIGX_i
\end{align}
\end{theorem}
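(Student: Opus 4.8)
The plan is to substitute the explicit AFR baseline from the preceding lemma directly into the ABIGX definition and exploit the fact that for PCA the SPE index is an exact quadratic form in a projection matrix, which makes the path integral trivial. First I would record the two ingredients I need. From Lemma (AFR for PCA), the reconstructed sample is $\mathbf{x}'_{AFR} = \mathbf{x} - \tilde{\mathbf{C}}\mathbf{x}$, so the reconstruction direction is simply $\mathbf{x} - \mathbf{x}'_{AFR} = \tilde{\mathbf{C}}\mathbf{x}$, and its $i$-th component is $(\mathbf{x} - \mathbf{x}'_{AFR})_i = \xi_i^T\tilde{\mathbf{C}}\mathbf{x}$. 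For the gradient, since $\tilde{\mathbf{C}} = \mathbf{I} - \mathbf{P}\mathbf{P}^T$ is symmetric and idempotent, $SPE(\mathbf{x}) = \Vert\tilde{\mathbf{C}}\mathbf{x}\Vert_2^2 = \mathbf{x}^T\tilde{\mathbf{C}}\mathbf{x}$, whence $\nabla_{\mathbf{x}} SPE(\mathbf{x}) = 2\tilde{\mathbf{C}}\mathbf{x}$ and $\partial SPE/\partial x_i = 2\xi_i^T\tilde{\mathbf{C}}\mathbf{x}$.

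Next I would parametrize the straight-line path appearing in Eq. \ref{eq_ABIGX}. Writing $\mathbf{z}(\alpha) = \mathbf{x}'_{AFR} + \alpha(\mathbf{x} - \mathbf{x}'_{AFR}) = \mathbf{x} - (1-\alpha)\tilde{\mathbf{C}}\mathbf{x}$, the gradient along the path becomes $\partial SPE(\mathbf{z}(\alpha))/\partial x_i = 2\xi_i^T\tilde{\mathbf{C}}\mathbf{z}(\alpha)$. The key simplification is idempotence: applying $\tilde{\mathbf{C}}$ to $\mathbf{z}(\alpha)$ and using $\tilde{\mathbf{C}}^2 = \tilde{\mathbf{C}}$ collapses the expression to $2\alpha\,\xi_i^T\tilde{\mathbf{C}}\mathbf{x}$, i.e. the integrand is linear in $\alpha$. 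I would then integrate, using $\int_0^1 \alpha\,d\alpha = \tfrac{1}{2}$, to obtain $\int_0^1 \partial SPE(\mathbf{z}(\alpha))/\partial x_i\,d\alpha = \xi_i^T\tilde{\mathbf{C}}\mathbf{x}$.

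Finally I would multiply this accumulated gradient by the direction component from the first step, giving
\begin{align}
ABIGX_i = (\xi_i^T\tilde{\mathbf{C}}\mathbf{x})\,(\xi_i^T\tilde{\mathbf{C}}\mathbf{x}) = (\xi_i^T\tilde{\mathbf{C}}\mathbf{x})^2, \notag
\end{align}
which is exactly the CP contribution $C_i$ of Eq. \ref{eq_contri}, completing the proof. In truth there is no genuine obstacle here: the whole argument rests on the identity $SPE(\mathbf{x}) = \mathbf{x}^T\tilde{\mathbf{C}}\mathbf{x}$ and the projection property $\tilde{\mathbf{C}}^2 = \tilde{\mathbf{C}}$. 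The only point that needs care is the idempotence step that linearizes the integrand along the path; it is this collapse, combined with the factor $\tfrac{1}{2}$ from integrating $\alpha$, that exactly cancels the factor of $2$ from the quadratic gradient and reproduces the squared residual component. I would also remark that this is the expected outcome, since the integrated-gradient completeness/linearity property guarantees that for a quadratic objective the path integral recovers the endpoint difference, here manifested as the squared residual projection.
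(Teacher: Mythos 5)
Your proof is correct and follows essentially the same route as the paper's: substitute the AFR baseline $\mathbf{x}'_{AFR}=\mathbf{x}-\tilde{\mathbf{C}}\mathbf{x}$ from Lemma \ref{lem_afrpca} into the ABIGX definition, use $SPE(\mathbf{z})=\mathbf{z}^T\tilde{\mathbf{C}}\mathbf{z}$ with the idempotence of $\tilde{\mathbf{C}}$ to evaluate the path integral, and conclude $ABIGX_i=(\xi_i^T\tilde{\mathbf{C}}\mathbf{x})^2=CP_i$. The only (immaterial) difference is ordering: you invoke $\tilde{\mathbf{C}}^2=\tilde{\mathbf{C}}$ to linearize the integrand to $2\alpha\,\xi_i^T\tilde{\mathbf{C}}\mathbf{x}$ before integrating, whereas the paper integrates first and then kills the constant term via $\tilde{\mathbf{C}}\mathbf{x}'_{AFR}=0$.
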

\begin{proof}
	With $\mathbf{x}'_{AFR}$ in Lemma \ref{lem_afrpca}, the ABIGX is (omit the subscript of $\mathbf{x}'_{AFR}$):
	\begin{align}
		ABIGX_i &= \xi_i^T \tilde{\mathbf{C}}\mathbf{x} \int_{\alpha=0}^1 \frac{\partial SPE(\mathbf{x}' + \alpha\tilde{\mathbf{C}}\mathbf{x})}{\partial \mathbf{x}_i}\,d\alpha  \notag \\
		&= \xi_i^T \tilde{\mathbf{C}}\mathbf{x} \xi_i^T (2\tilde{\mathbf{C}}\mathbf{x}' +\tilde{\mathbf{C}}\mathbf{x})
	\end{align}
	\par From Eq. \ref{eq_afrpca}, we have $\tilde{\mathbf{C}}\mathbf{x}'= \tilde{\mathbf{C}} (\mathbf{x}- \tilde{\mathbf{C}}\mathbf{x})=0$, thus the ABIGX contribution is obtained:
	\begin{align}
	ABIGX_i = (\xi_i^T \tilde{\mathbf{C}}\mathbf{x})^2
\end{align}
which is identical to $CP_i$.
\end{proof}

\par Additional, inspired by Lemma \ref{lem_afrpca}, we discuss the feasibility of gradient-based attack, which is the default algorithm of AFR and ABIGX:
\begin{remark}
	(Gradient-based attack on PCA) Gradient-based attack always reconstructs fault in the direction of residual subspace projection. 
\end{remark} 
\begin{proof}
	The adversarial perturbation at each step of PGD is:
	\begin{align}
		\frac{1}{L} \nabla_\mathbf{x}SPE(\mathbf{x})=\frac{2}{L} \tilde{\mathbf{C}} \mathbf{x} \propto  \tilde{\mathbf{C}} \mathbf{x} 
	\end{align}
	where $L$ is a scalar.
\end{proof}
\par This is consistent with intuition since the residual projection points towards the direction that SPE descends fastest.

\begin{table}[]
	\centering
	\caption{Mean difference between variable contribution of CP, RBC, and ABIGX (on PCA detector for TEP).}
	\begin{tabular}{@{}lcc|cc@{}}
	\toprule
		& \multicolumn{2}{c|}{ABIGX} & \multicolumn{2}{c}{ABIGX-OneVer} \\
	AFR	& PGD     & MILP    & PGD  & MILP  \\\midrule
	CP  & \num{1.62e-6}   & \num{1.25e-6}     & -             & -              \\
	RBC & -           & -           & \num{8.87e-3}       & \num{3.5e-5}         \\ \bottomrule
	\end{tabular}
	\label{table_error}
\end{table}

\section{Experiments}
\label{sec_expr}
\subsection{Evaluation metrics and Compared methods}
\par We introduce two kinds of quantitative metrics for the fair evaluation of explanation methods:
\par \textbf{Correctness}~\cite{8315047} measures the difference between variable contribution and the ground-truth root cause. The better explanation should be closer to the ground-truth. Specifically, \emph{Correctness-AUC} treats the contributions as binary classification prediction scores. By changing the threshold of contribution scores to be negative class, the area under the receiver operating characteristic curve is calculated \cite{kapishnikov2021guided}. The correctness can also be measured by summing up the attribution scores on the ground-truth root cause variable, called \emph{Correctness-SUM}.

\par \textbf{Consistency}~\cite{kapishnikov2019xrai} tests whether contributions focus on where the model is truly looking. The better explanation methods should be more consistent with model behavior. The metric \emph{Consistency-ADD} gradually adds the variable values of the explained fault sample to the normality sample. By sliding a contribution threshold, the variable with the largest contribution is first added and the least the last. The better explanation method should increase the model prediction\footnote{The model prediction for classification is fault type confidence with Softmax function, and that for detection is SPE value. When calculating the AUC of detection models, we normalize all SPE values to the range [0,1].} more quickly, which can be quantized by the area under the prediction score curve w.r.t threshold. Conversely, \emph{Consistency-DEL} first deletes the most important variables until all the features are replaced by the normality sample. Similarly, the better method should decrease the model prediction faster, which can be measured by the AUC w.r.t. threshold.

% \par It will be noted that correctness is a important metric for explainers but not the determining one, since the variable contribution 
\par For comparing with ABIGX, we introduce three advanced XAI methods, saliency map, IG and DeepLift. DeepLift~\cite{DBLP:journals/corr/ShrikumarGK17} is a gradient backpropagation explanation method that can properly handle cases where IG may give misleading results.

\subsection{Tennessee Eastman Process (TEP)}
\subsubsection{Dataset Description}
\par TEP is a standard benchmark that has been widely adopted by existing FDC and diagnosis works. We consider 33 measurement variables and 14 fault types of TEP dataset. The ground-truth root causes of 14 TEP faults are mainly based the detailed descriptions of variables and faults (see Table 8.1, 8.2, and 8.4 in \cite{chiang2000fault}) and also refers to some fault diagnosis works~\cite{qian2020locally,9852307,peng2022towards,doi:10.1021/acs.iecr.6b01916}. For clarity, we intuitively display the explanation comparisons for two TEP fault types, Fault 6 and Fault 14, respectively in Fig. \ref{fig_TEfault14} and \ref{fig_TEfault6}. Fault 6 is caused by root variables XMEAS(1) and XMV(3) and Fault 14 is caused by XMEAS(9), XMV(10), and XMEAS(21).
% Table \ref{tbl_TEPFualt} only introduces the ground-truth variables of Fault 6 and Fault 14, the explanations of which are intuitively compared in Fig. \ref{fig_TEfault14} and \ref{fig_TEfault6}.
\par Fig. \ref{fig_TEfault14} and \ref{fig_TEfault6} respectively display sensor variable contributions of fault detection and classification models, where the four subplots of each figure show the variable contributions explained by four different techniques. Besides ABIGX, IG and saliency map, we test the CP techniques for detection model and DeepLift for classification model.
\par  Fig. \ref{fig_TEfault14} and \ref{fig_TEfault6} uses the plot style from the work~\cite{NIPS2017_7062}. The horizontal axis represents the variable contribution value (with direction). The variables are sorted by importance on the vertical axis, where the eight variables with the most contributions are displayed. Notably, the plots also show the correlation between variable (feature) value and its contribution by different colored points (a point in the row represents a fault sample).

	\begin{figure}[t]
		\centering
		\includegraphics[width=0.5\textwidth]{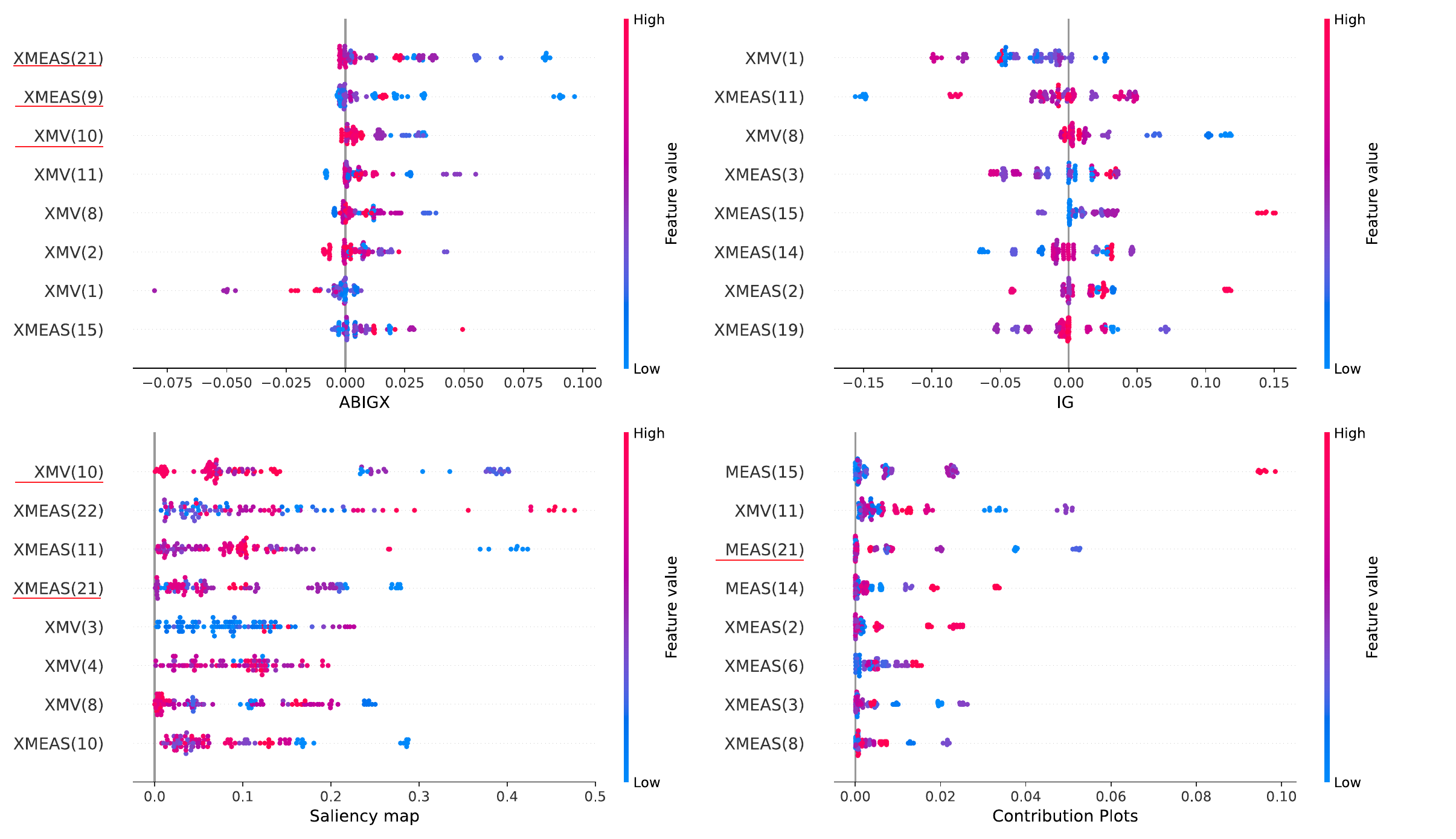}
		\caption{Explanation comparison for AE detector on TEP Fault 14, where the ground-truth root variables are underlined in red.}
	\label{fig_TEfault14}
	\end{figure}

\subsubsection{Explainable Fault Detection}
\par \textbf{Identicality between Fault Diagnosis and ABIGX:} Section \ref{sec_FD} theoretically proves the identicality between two fault diagnosis methods and ABIGX. On the PCA detector for TEP faults, we compute the empirical error between them, which is the mean value over contributions of all variables. Table \ref{table_error} reports the results, which shows the differences are small enough (especially with AFR-reconstructed samples via MILP).

\begin{table}[]
	\renewcommand\arraystretch{1.4}
	\centering
	\caption{Evaluation metrics of explanation methods (on AE detector for TEP).}
	\begin{tabular}{@{}lcc|cc@{}}
	\toprule
				 & \multicolumn{2}{c|}{Correctness} & \multicolumn{2}{c}{Consistency} \\
	Metrics			 & AUC$\uparrow$             & SUM$\uparrow$            & ADD$\uparrow$            & DEL$\downarrow$            \\  \midrule
	ABIGX        & 0.382          & 0.041          & \textbf{0.922}          & \textbf{0.256}          \\
	ABIGX-OneVar & \textbf{0.478}   & \textbf{0.055}          & 0.903          & 0.275          \\
	CP           & 0.326          & 0.028          & 0.896          & 0.283          \\
	Saliency map & 0.470          & 0.051          & 0.897          & 0.292          \\
	IG           & 0.395          & 0.045          & 0.913          & 0.267          \\ \bottomrule
	\end{tabular}
	\label{tbl_AEmetrics}
\end{table}

\textbf{Explanation results:} Firstly, Fig \ref{fig_TEfault14} plots the intuitive comparisons between the different explanations on AE models for the TEP dataset, showing that ABIGX assigns the highest contributions to the root variables. At the same time, other methods do not provide complete explanations. 

\par Table \ref{tbl_AEmetrics} reports four evaluation metrics on the explanations of AE detection for TEP, where 14 Fault types are concerned. Overall, ABIGX methods outperform other explanations, including the advanced IG (DeepLift is not adapted for fault detection). ABIGX-OneVar provides the explanations that are closest to the ground-truth root variables. The saliency map that attributes variables independently also performs well on the correctness metrics. On the other hand, ABIGX provides explanations that best match the model behavior. We argue that the explanations concerning variable interaction are closer to how models actually deal with the samples.

\subsubsection{Explainable Fault Classification}
\par Similar to fault detection, Fig \ref{fig_TEfault6} plots the explanation of NN-based fault classifier on TEP Fault 6. The advantage of ABIGX is on Variable XMV(3). ABIGX assigns higher contributions to this variable, while the saliency map even ignores it, the reason for which is analyzed in Section \ref{sec_fcinsight}.

\par Table \ref{tbl_NNmetrics} reports the explanation metrics on fault classification, where the method performances are similar to the detection. ABIGX still generally performs better, even under the correctness (except AUC). ABIGX-OneVar still emphasizes correctness but has poor model consistency, which is similar to the saliency map. DeepLift and IG have similar performances. 

\par \textbf{Validation of classification SPE index:} An ablation study is performed with ABIGX-AdvAFR. Instead of ABIGX's classification SPE proposed in Section \ref{sec_spefc}, ABIGX-AdvAFR reconstructs faults by the original adversarial attack objective (i.e., minimizing Softmax output confidences of normal class). The overall better performance of ABIGX than ABIGX-AdvAFR validates the effectiveness of the novel classification SPE index for fault reconstruction.

\begin{figure}
	\centering
\includegraphics[width=0.5\textwidth]{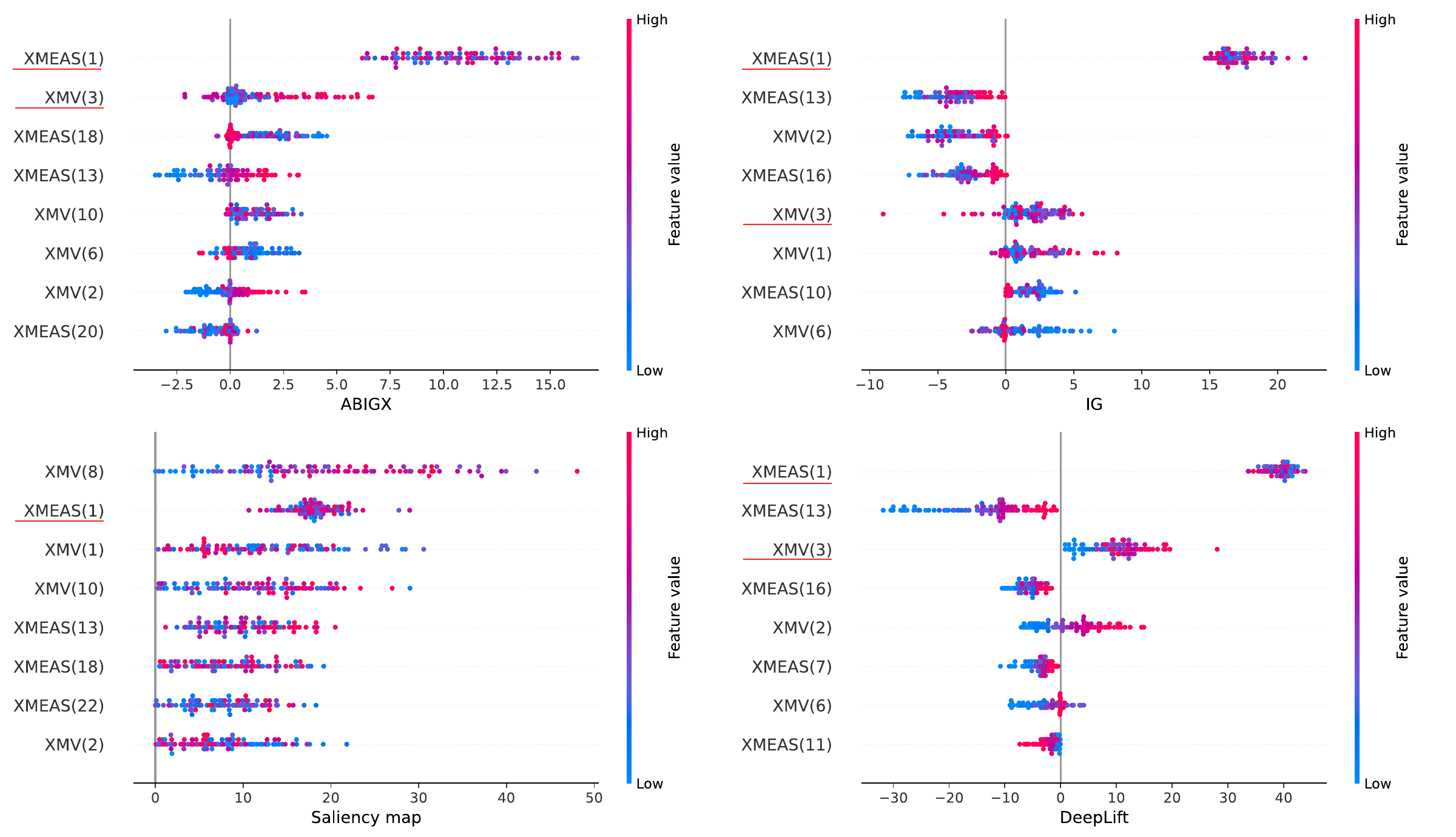}
\caption{Explanations comparison for NN classifier on TEP Fault 6, where the ground-truth root variables are underlined in red.}
\label{fig_TEfault6}
\end{figure}

\begin{table}[]
	\renewcommand\arraystretch{1.4}
	\centering
	\caption{Evaluation metrics of explanation methods (on NN classifier for TEP).}
	\begin{tabular}{@{}lcc|cc@{}}
	\toprule
				 & \multicolumn{2}{c|}{Correctness} & \multicolumn{2}{c}{Consistency} \\
	Metrics			 & AUC$\uparrow$             & SUM$\uparrow$            & ADD$\uparrow$            & DEL$\downarrow$            \\  \midrule
	ABIGX        & 0.557          & \textbf{0.075}          & \textbf{0.890}          & \textbf{0.032}             \\
	ABIGX-OneVar & 0.584          & 0.073          & 0.723         & 0.121        \\
	Saliency map   & 0.573          & 0.072          & 0.732          & 0.133          \\
	DeepLift  & 0.400          & 0.037          & 0.876          & 0.042          \\
	IG           & 0.415          & 0.045          & 0.886          & 0.043          \\ 
	ABIGX-AdvAFR &\textbf{0.608} & 0.074&  0.726 &0.135 \\\bottomrule
	\end{tabular}	
	\label{tbl_NNmetrics}
	\end{table}

	\begin{figure}[t]
		\centering
		\includegraphics[width=0.44\textwidth]{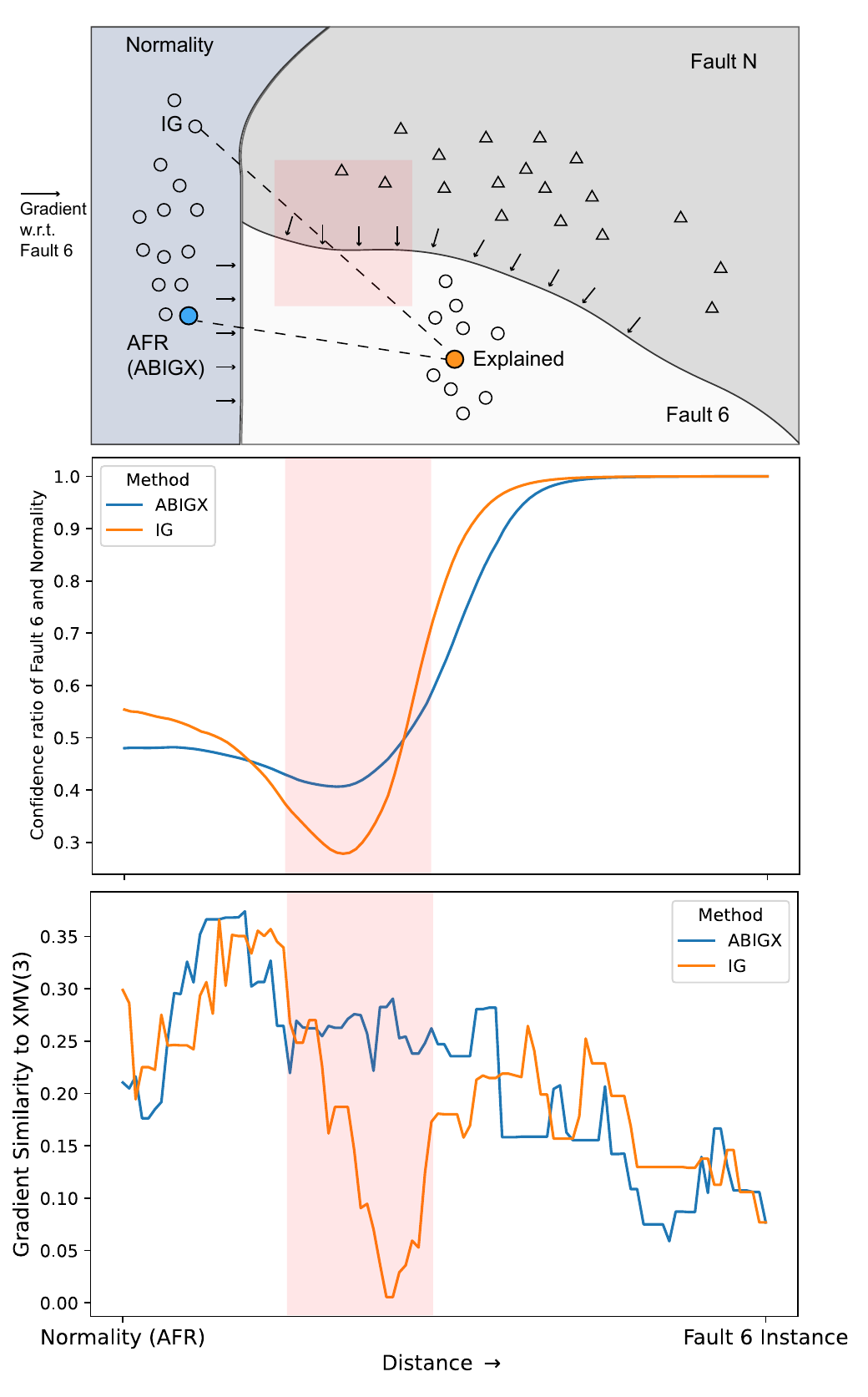}
		\caption{Effect of different baselines (ABIGX and IG) on the integrated gradients. Top) The intuitive illustration of two baselines and integral paths (not the real cases); Middle) The model confidences along the two paths from two baselines to explained sample; Bottom) The cosine similarity between gradient direction and the root variable XMV(3), along the same paths. The red boxed in three figures are noisy areas corresponding to each other.}
		\label{fig_insight}
	\end{figure}
	
\subsection{Insights}
\label{sec_fcinsight}
\par Besides the comparison between different reconstructed fault samples, we give a deep insight into the distinctions between ABIGX and IG, to explain why our proposal is superior for the XFDC.

\par The essential distinction between ABIGX and IG is the baseline (i.e., the start point of the integral interval): the baseline of ABIGX is the AFR-reconstructed sample, and IG's baseline is the normality sample. We discuss the effect of these two baselines on the variable contribution results. The explanations of TEP Fault 6 in Fig. \ref{fig_TEfault6} show that the major advantage of ABIGX to IG is Variable XMV(3). Hence, Fig. \ref{fig_insight} gives a deep insight into the reason that causes different contributions of XMV(3).

\par We argue that the two different baselines lead to different paths. The path of IG is more likely to go through the area where other fault types (i.e., fault types other than explained fault and normality) have high confidence, which we call noisy area. The red boxed in Fig. \ref{fig_insight} present this area. ABIGX is more likely to avoid the noisy area due to its baseline by AFR, which reconstructs faults on the direction that fastest minimizes the classification SPE. Hence, we suppose that the path of ABIGX is more related to the classification areas of normality and explained fault (e.g., Fault 6). On the contrary, the baseline of the random normality sample may lead IG to the noisy area, the possible case of which is depicted in the top of Fig. \ref{fig_insight}.

\par To validate our analysis, Fig. \ref{fig_insight} also reports the case of a TEP sample with Fault 6. Firstly, the middle figure reports the confidence ratio of (the sum of) Fault 6 and normality. When moving from the baseline to explained sample, IG goes through an area where the confidence ratio drops rapidly (red box), which means the confidence of other faults increases largely (e.g., Fault N in the top of Fig. \ref{fig_insight}). Secondly, the bottom of Fig. \ref{fig_insight} reports the gradient similarity to the root variable XMV(3) along the paths. The noisy area gradient is completely irrelevant to XMV(3). We can find that these two areas are heavily overlapped.

\par Unlike detection, fault classification models the distinction between normality and fault, and also distinguishes different fault types. However, XFDC explains what variables cause the model to predict the fault sample as a ``fault'' rather than normality. As for why Fault 6 is not predicted as the ``Fault N'', this is not considered by XFDC. Hence, this is why the noisy area leads to incorrect gradient direction (e.g., the gradient directions in the top of Fig. \ref{fig_insight}).

\par Incidentally, the gradient plot in the bottom of Fig. \ref{fig_insight} explains why the saliency map is inaccurate: the saliency map only computes the gradient on the explained sample, but the gradient of XMV(3) is much higher when closing to the normality (or AFR-reconstruction).

\begin{figure}
	\centering
	\includegraphics[width=0.45\textwidth]{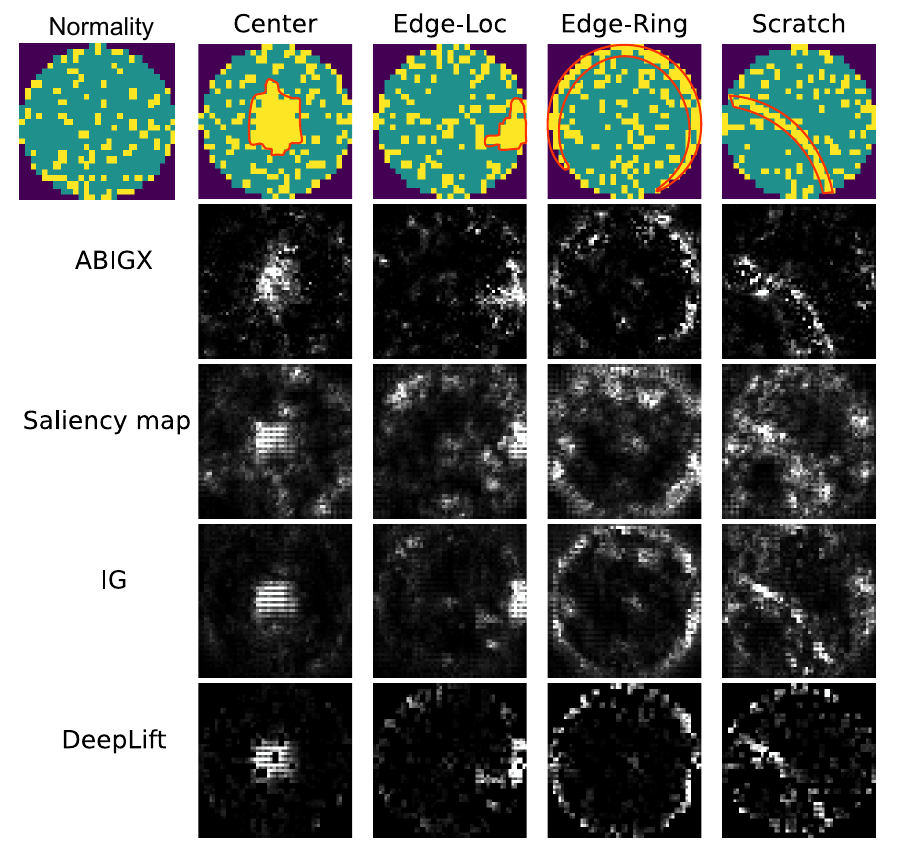}
	\caption{Explanations for CNN classifier on four sampled wafer maps with different fault patterns, where the ground-truth root pixels are circled in red.}
	\label{fig_wm}
\end{figure}

\subsection{Wafer Map Fault}
\subsubsection{Dataset Description}
\par Wafer map analysis is critical in daily semiconductor manufacturing operations. Wafer maps provide visual details that are crucial for identifying the stage of manufacturing at which wafer fault pattern occurs. We study the explanation of the wafer map fault pattern classification, the target of which is to determine which pixels of the wafer maps are the root cause of faults.

\par We train a convolution neural network (CNN)-based fault classifier on WM-811K dataset~\cite{6932449}, of which four fault types (Center, Edge-Loc, Edge-Ring, and Scratch) are selected. The fault and normal wafer maps are displayed in Fig. \ref{fig_wm}, where the ground-truth root pixels are circled by red lines. The green pixels denote the normal dies, and the yellow pixels are the defective dies. A wafer map is normal when there is no certain defective die pattern (e.g., the top left one in Fig. \ref{fig_wm}).
\subsubsection{Explainable Fault Classification}
\par Fig \ref{fig_wm} shows the explanations comparison on four samples. Compared with the saliency map and IG, the variable contributions of ABIGX are less noisy and focus more on the root causes. DeepLift also provides clear variable contributions, but some of them are incomplete (e.g., Center and Edge-Loc fault type), and some are over attributed (e.g., the ring of Scratch fault). Table \ref{tbl_wmmetrics} reports the four introduced metrics to evaluate the performances on the wafer map dataset, showing that ABIGX still provides a general better explanation. 
\par The experiments on wafer map fault explanation validates the generalization of ABIGX, which is not only applicable for sensor data but also for the unstructured images trained by CNN models.

\begin{table}[]
	\renewcommand\arraystretch{1.4}
	\centering
	\caption{Evaluation metrics of explanation methods (on CNN classifier for wafer maps).}
	\begin{tabular}{@{}lcc|cc@{}}
	\toprule
				 & \multicolumn{2}{c|}{Correctness} & \multicolumn{2}{c}{Consistency} \\
	Metrics			 & AUC$\uparrow$             & SUM$\uparrow$            & ADD$\uparrow$            & DEL$\downarrow$            \\  \midrule
	ABIGX        & \textbf{0.839}          & 0.524          & \textbf{0.859}          & \textbf{0.040}             \\
	Saliency map   & 0.571          & 0.298         & 0.623          & 0.236          \\
	IG & 0.814          & 0.477          & 0.796          & 0.052          \\
	DeepLift            & 0.812          & \textbf{0.541}          & 0.806          & 0.053           \\\bottomrule
	\end{tabular}	
	\label{tbl_wmmetrics}
	\end{table}
% \section{Fault Classification}
% \par The distinction of explainable fault classification is that only needs to given an explanations of why fault occurs rather than normality.

\section{Conclusion}
\label{sec_conclu}
\par This work proposes a unified framework, ABIGX, for explaining the general FDC models. ABIGX is derived from the essentials of CP and RBC, which have succeeded in fault diagnosis of PCA models. The novel proposed fault reconstruction method, AFR, is the core part of ABIGX. AFR reframes the existing methods from the adversarial attack perspective and generalizes FR to the fault classification by the novel classification SPE index. For explainable fault classification, we analyze the performances in the problem of fault class smearing. Compared with the saliency map and IG, we prove that ABIGX provides more accurate variable contributions with a lower degree of fault class smearing. For explainable fault detection, we theoretically prove the identicality of linear CP and RBC to the ABIGX. With quantitative metrics, the experiment on a wide range of models and datasets validates the general superiority of ABIGX to other advanced explainers.
\section*{Acknowledgement}
This work was supported in part by the National Natural Science Foundation of China (NSFC) (92167106).
\bibliographystyle{IEEEtran}
\bibliography{manuscript}

\appendix
\section{Proofs}
This section provides the proofs of the some results in the main paper.
\subsection{Proof of Theorem \ref{theo_optweight}}
\label{append_pr1}
\begin{manualtheorem}{\ref{theo_optweight}}[Restated]
		Let $w_{y,i}\in \mathbf{W}^*$ be the optimal model weight for variable $i$ on fault class $y$. For normality class $y=0$, the optimal $w_{0,i}$ is:
	\begin{align}
		w_{0,i} = \begin{cases}
			-\frac{1}{N}, & i \in \{1,\cdots,N\}\\
			0, & i \in \{N+1,\cdots,M\}
		\end{cases}
	\end{align}
	\par For fault class $y$, the optimal $w_{y,i}$ is: 
	\begin{align}
		w_{y,i} = \begin{cases}
			\frac{N}{2N-1},&\ i = y \\ 
			-\frac{1}{2N-1}, & i \in \{1,\cdots,N\}\setminus y \\
			0, & i \in \{N+1,\cdots,M\}
		\end{cases}
	\end{align}
\end{manualtheorem}

\begin{proof}
	We can obtain optimal weight $\mathbf{w}_y^*$ for class $y$, by calculating the derivate of Eq. \ref{eq_fld} w.r.t. $\mathbf{w}_y$:
	\begin{align}
	\nabla_{\mathbf{w}_y} = \frac{1}{2}\frac{(\mathbf{w}_y^T \Sigma \mathbf{w}_y)\Sigma_b \mathbf{w}_y - (\mathbf{w}_y^T \Sigma_b \mathbf{w}_y)\Sigma \mathbf{w}_y }{(\mathbf{w}_y^T \Sigma \mathbf{w}_y)^2}
	\end{align}
	where $\Sigma_b = (\mathbf{\mu}_y - \overline{\mathbf{\mu}})(\mathbf{\mu}_y - \overline{\mathbf{\mu}})^T$. The optimal $\mathbf{w}_y^*$ can be found when derivate equals zero:
\begin{align}
	0 &= (\mathbf{w}_y^T \Sigma \mathbf{w}_y)\Sigma_b \mathbf{w}_y - (\mathbf{w}_y^T \Sigma_b \mathbf{w}_y)\Sigma \mathbf{w}_y \notag \\
	 \mathbf{w}_y &= \Sigma^{-1} \frac{(\mathbf{w}_y^T \Sigma \mathbf{w}_y)\Sigma_b \mathbf{w}_y}{\mathbf{w}_y^T \Sigma_b \mathbf{w}_y} \notag\\
	&=\frac{\mathbf{w}_y^T \Sigma \mathbf{w}_y}{\mathbf{w}_y^T \Sigma_b \mathbf{w}_y} (\mathbf{\mu}_y - \overline{\mathbf{\mu}})^T \mathbf{w}_y  \Sigma^{-1} (\mathbf{\mu}_y - \overline{\mathbf{\mu}}) 
\end{align}
where the term $\frac{\mathbf{w}_y^T \Sigma \mathbf{w}_y}{\mathbf{w}_y^T \Sigma_b \mathbf{w}_y} (\mathbf{\mu}_y - \overline{\mathbf{\mu}})^T \mathbf{w}_y$ is a scalar and it can be omitted since $\mathbf{w}_y$ will be normalized. Hence, the direction of $\mathbf{w}_y$ is determined by:
\begin{align}
	 \mathbf{w}_y = k\cdot \Sigma^{-1} (\mathbf{\mu}_y - \overline{\mathbf{\mu}})
\end{align}
\par Since Example \ref{examp_data} defines the fault and normal variables have the same variances, the $\Sigma^{-1}=k\cdot J$ is a matrix with all the same elements. Thus, we can obtain that the weight of each class is in the same direction as the difference between the class mean and the overall mean. 
\par The normality class has the identical mean difference for the first $N$ variables. The mean difference of fault class $y$ is:

\begin{align}
	\label{eq_variablem}
	(\mathbf{\mu}_y - \overline{\mathbf{\mu}})_i=
	\begin{cases}
	 f,&\ i = y \\ 
	- \frac{f}{N},&\ i \in \{1,\cdots,N\}\setminus y \\
	 0,& \  i \in \{N+1,M\}
	\end{cases}
\end{align}
\par With $\ell_1$ normalization, we can obtain the optimal classifier weight $\mathbf{W}^*$ by stacking the optimal row vectors, which can be expressed in the matrix form:

\begin{align}\mathbf{W}^* = \left.
\begin{bNiceMatrix}
	w_0  & w_0  & \cdots & w_0  & 0 &  \cdots & 0 \\
	w_1 & w_2  & \cdots & w_2    & 0 &  \cdots & 0 \\
	w_2 & w_1  & \cdots & w_2    & 0 &  \cdots & 0 \\
	\cdots & \cdots   & \ddots & \cdots   & \cdots & \ddots & \cdots \\
	w_2 & w_2   & \cdots & w_1   & 0 &  \cdots & 0 
	\CodeAfter
  \OverBrace{1-1}{1-4}{N}[shorten,yshift=1mm]
  \OverBrace{1-5}{1-7}{M-N}[shorten,yshift=1mm]
\end{bNiceMatrix}
\right\}N+1
\label{eq_matrix}
\end{align}

where $w_0 = -\frac{1}{N}$, $w_1 = \frac{N}{2N-1}$, and $w_2 = -\frac{1}{2N-1}$.
\end{proof}

\subsection{Proof of Theorem \ref{theo_fcsig}}
\label{append_pr2}
\begin{manualtheorem}{\ref{theo_fcsig}}[Restated]
	(Fault class smearing in IG) Given the explained fault type $y$ with the ground-truth variable $y$ and the baseline of the expectation of normality samples $\mathbf{x}'=\overrightarrow{0} $, the expectation of IG contributes are:
	\begin{align}
		\mathbb{E}_{\mathbf{x}\sim \mathbf{X}_y} IG_i =\begin{cases}
			\vert fw_1 \vert, &\ i = y \\ 
			\vert\frac{\sigma\sqrt{2}}{\sqrt{\pi}} w_2 \vert, & i \in \{1,\cdots,N\}\setminus y \\
		\end{cases}
	\end{align}
	\par The degree of fault class smearing ($FCS$) in IG is:
	\begin{align}
	FCS_{IG} = \frac{N-1}{N} \frac{\sigma\sqrt{2} }{f\sqrt{\pi}} < FCS_{grad}
	\end{align}
\end{manualtheorem}
\begin{proof}
	Assume the fault magnitude is far greater than variance, i.e., $f\gg \sigma$. Then the expectation of IG contribution on ground-truth variable $y$ is:
	\begin{align}
		\mathbb{E}_{\mathbf{x}\sim \mathbf{X}_y} IG_y =\mathbb{E}_{x_y\sim  \mathcal{N}(f,\sigma^2)} \vert \int_{x=0}^{x=x_y} w_1 \, dx \vert = \vert fw_1 \vert
	\end{align}
	where we assume $x_y$ is always greater than zero. The contributions of the other irrelevant variables ($i\neq y$) are: 
	\begin{align}
		\mathbb{E}_{\mathbf{x}\sim \mathbf{X}_y} IG_i& =\mathbb{E}_{x_i\sim  \mathcal{N}(0,\sigma^2)} \vert \int_{x=0}^{x=x_i} w_2 \, dx \vert \notag \\
		& = \mathbb{E}_{x_i\sim  \mathcal{N}(0,\sigma^2I)} \vert x_i \vert \vert w_2 \vert \notag \\
		& = \vert \frac{\sigma\sqrt{2}}{\sqrt{\pi}} w_2 \vert
	\end{align}
	where $\frac{\sigma\sqrt{2}}{\sqrt{\pi}}$ is the expectation of half-Gaussian distribution.
\end{proof}

\subsection{Proof of Theorem \ref{theo_abigx}}
\label{append_pr3}
\begin{manualtheorem}{\ref{theo_abigx}}[Restated]
	(Fault class smearing in ABIGX) The fault class smearing degree in ABIGX is lower than that in IG:
	\begin{align}
		 FCS_{ABIGX}< FCS_{IG}  \notag
	\end{align}
\end{manualtheorem}
\begin{proof}
	Given the explained fault sample $\mathbf{x}$ of fault type $y$, the AFR-reconstructed sample on the linear model is optimized with the classification SPE objective (denote optimal weight $\mathbf{W}^*$ by $\mathbf{W}$ ): 
\begin{align}
	\mathbf{x}'_{AFR} &= \arg\min_{\mathbf{x}'} \Vert \mathbf{W}\mathbf{x}' - \mathbb{E}_{\mathbf{x}\sim\mathbf{X}_{0}}[\mathbf{W}\mathbf{x}] \Vert_2^2 \notag \\
&=\arg\min_{\mathbf{x}'} \Vert \mathbf{W}(\mathbf{x}' - \overline{\mathbf{x}_{0}} ) \Vert_2^2 
\end{align}
where $\overline{\mathbf{x}_{0}}=\overrightarrow{0} $ is the mean vector of normality samples. We simply perform a one-step gradient solution for AFR:
\begin{align}
	\label{eq_fcsafr}
	\mathbf{x} - \mathbf{x}'_{AFR} & = \eta \nabla_{x'=x} \Vert \mathbf{W}(\mathbf{x}' - \overline{\mathbf{x}_{0}} ) \Vert_2^2 \notag \\
	& = 2\eta {\mathbf{W}}^T\mathbf{W}\mathbf{x}
\end{align}
\par Denote $\mathbf{x}'_{AFR}$ by $\mathbf{x}'=[x'_1,\cdots,x'_M]$, the expectation of ABIGX variable contributions are:
\begin{align}
	\label{eq_fcsexcp}
	\mathbb{E}[ ABIGX_i]& =\begin{cases}
		\mathbb{E}_{x_i\sim  \mathcal{N}(f,\sigma^2)} \vert \int_{x=x'_{i}}^{x=x_y} w_1 \, dx \vert & i=y\\
		\mathbb{E}_{x_i\sim  \mathcal{N}(0,\sigma^2)} \vert \int_{x=x'_{i}}^{x=x_y} w_2 \, dx \vert & i\neq y
	\end{cases}
\end{align}
\par Omit the consistent scalar in eq. \ref{eq_fcsafr}, Eq. \ref{eq_fcsexcp} can be written as:
\begin{align}
	\mathbb{E}_{\mathbf{x} \sim \mathbf{X}_y} [ ABIGX_i]& =\begin{cases}
		\vert \mathbb{E}\ row_i({\mathbf{W}}^T\mathbf{W}\mathbf{x}) w_1 \vert & i=y\\
		\vert \mathbb{E}\ row_i({\mathbf{W}}^T\mathbf{W}\mathbf{x}) w_2 \vert & i\neq y
	\end{cases}
\end{align}
\par Based on Theorem \ref{theo_optweight}, the Hessian matrix ${\mathbf{W}}^T\mathbf{W}\in \mathbb{R}^{M\times M}$ is:
\begin{align}
	{\mathbf{W}}^T\mathbf{W}_{i,j} = \begin{cases}
		\frac{1}{N^2} + \frac{N^2+N-1}{(2N-1)^2}, & i=j\leq N \\
		\frac{1}{N^2} - \frac{N+2}{(2N-1)^2},  & i\neq j\leq N \\
		0, &otherwise
	\end{cases}
\end{align}
\par Let $w_3=\frac{1}{N^2} + \frac{N^2+N-1}{(2N-1)^2}$ and $w_4=\frac{1}{N^2} - \frac{N+2}{(2N-1)^2}$, we have the expected ABIGX variable contributions $\mathbb{E}_{\mathbf{x} \sim \mathbf{X}_y} [ABIGX_i]$:
\begin{align}
\begin{cases}
		\mathbb{E}_{x'\sim  \vert \mathcal{N}(0,\sigma^2) \vert}\vert (fw_3 + (N-1)x'w_4) w_1 \vert & i=y\\
		\mathbb{E}_{x'\sim \vert \mathcal{N}(0,\sigma^2) \vert} \vert (fw_4 + x'w_3 + (N-2)x'w_4) w_2 \vert & i\neq y
	\end{cases}
\end{align}
\par Then we obtain the FCS degree of ABIGX:
\begin{align}
	FCS_{ABIGX}
	&= \frac{N-1}{N} \frac{fw_4 + x'w_3 + (N-2)x'w_4}{fw_3 + (N-1)x'w_4}
\end{align}
% &= \frac{ (N-1)\vert fw_4 + x'w_3 + (N-2)x'w_4 \vert \vert w_2 \vert } {\vert fw_3 + (N-1)x'w_4 \vert \vert w_1 \vert}\notag \\
\par We can get the sufficient and necessary condition for $FCS_{ABIGX} < FCS_{IG}$:
\begin{align}
	\frac{fw_4 + x'w_3 + (N-2)x'w_4}{fw_3 + (N-1)x'w_4} &<\frac{x'}{f} \notag\\
	f^2w_4 + (N-2)f x'w_4 &<(N-1){x'}^2w_4
\end{align}
\par Since $w_4<0$, the sufficient condition for $FCS_{ABIGX} < FCS_{IG}$ is:
\begin{align}
	f^2 + (N-2)f x'& >(N-1){x'}^2 \notag \\
	f^2& >{x'}^2
	\label{eq_suf1}
\end{align}
\par Since $f>x'$, we can prove that $FCS_{ABIGX} < FCS_{IG}$ always holds.
\end{proof}

\end{document}